\def\BibTeX{{\rm B\kern-.05em{\sc i\kern-.025em b}\kern-.08em
		T\kern-.1667em\lower.7ex\hbox{E}\kern-.125emX}}
\definecolor{mycitecolor}{RGB}{0,0,153}
\definecolor{mylinkcolor}{RGB}{179,0,0}
\definecolor{myurlcolor}{RGB}{255,0,0}
\definecolor{darkgreen}{RGB}{0,170,0}
\definecolor{darkred}{RGB}{200,0,0}
\theoremstyle{plain}
\newtheorem{theorem}{Theorem}[section]
\newtheorem{proposition}[theorem]{Proposition}
\newtheorem{lemma}[theorem]{Lemma}
\newtheorem{corollary}[theorem]{Corollary}
\theoremstyle{definition}
\theoremstyle{remark}
\newcommand{\model}{\textsc{AdvDIFFormer}\xspace}
\newcommand{\modelinv}{\textsc{AdvDIFFormer-i}\xspace}
\newcommand{\modelser}{\textsc{AdvDIFFormer-s}\xspace}
\icmltitlerunning{Supercharging Graph Transformers with Advective Diffusion}
\begin{document}

\twocolumn[
\icmltitle{Supercharging Graph Transformers with Advective Diffusion}




\begin{icmlauthorlist}
\icmlauthor{Qitian Wu}{1}
\icmlauthor{Chenxiao Yang}{2}
\icmlauthor{Kaipeng Zeng}{3}
\icmlauthor{Michael Bronstein}{4,5}
\end{icmlauthorlist}

\icmlaffiliation{1}{Eric and Wendy Schmidt Center, Broad Institute of MIT and Harvard}
\icmlaffiliation{2}{Toyota Technological Institute at Chicago}
\icmlaffiliation{3}{Shanghai Jiao Tong University}
\icmlaffiliation{4}{University of Oxford}
\icmlaffiliation{5}{Aithyra}

\icmlcorrespondingauthor{Qitian Wu}{wuqitian@broadinstitute.org}

\icmlkeywords{Machine Learning, ICML}

\vskip 0.3in
]



\printAffiliationsAndNotice{} 

\begin{abstract}
    The capability of generalization is a cornerstone for the success of modern learning systems. For non-Euclidean data, e.g., graphs, that particularly involves topological structures, one important aspect neglected by prior studies is how machine learning models generalize under topological shifts. This paper proposes Advective Diffusion Transformer (\model), a physics-inspired graph Transformer model designed to address this challenge. The model is derived from advective diffusion equations which describe a class of continuous message passing process with observed and latent topological structures. We show that \model has provable capability for controlling generalization error with topological shifts, which in contrast cannot be guaranteed by graph diffusion models, i.e., the generalized formulation of common graph neural networks in continuous space. Empirically, the model demonstrates superiority in various predictive tasks across information networks, molecular screening and protein interactions\footnote{Codes are available at \url{https://github.com/qitianwu/AdvDIFFormer}}.
\end{abstract}

\section{Introduction}

Learning representations for non-Euclidean data 
is essential for geometric deep learning. Graph-structured data in particular has attracted increasing attention, as graphs are a very popular mathematical abstraction for systems of relations and interactions that can be applied from microscopic scales (e.g. molecules) to macroscopic ones (social networks). 
Common frameworks for learning on graphs include graph neural networks (GNNs)~\citep{scarselli2008gnnearly,mpnn2017,GCN-vallina}, which operate by propagating information between adjacent nodes of the graph networks, and graph Transformers~\citep{graphformer-neurips21,graphtrans-neurips21,graphgps,wunodeformer}, which propagate information among arbitrary node pairs through global attention.  
GNNs can be seen as discretized versions of local diffusion equations on graphs~\citep{diffusioncnn2016,klicpera2019diffusion,grand}, while graph Transformers can be considered as the counterparts of non-local diffusion~\citep{difformer,difformer2}. 
%
%
Linking graph learning models with diffusion equations offers powerful tools from the domain of partial differential equations (PDEs), allowing us to study the expressive power~\citep{bodnar2022neural}, behaviors such as over-smoothing~\citep{rusch2022gradient}
and over-squashing~\citep{curvature2022}, the settings of missing features~\citep{rossi2022unreasonable}, and guide architectural choices~\citep{di2022graph}.

While significant efforts have been devoted to understanding the expressive power of graph learning models, the generalization capabilities of such methods are largely an open question. 
Recent works attempt to study generalization of graph learning models (particularly GNNs) from various perspectives such as extrapolation in feature space~\citep{gnn-gen-3}, subgroup fairness~\citep{ma2021subgroup}, invariance principle~\citep{eerm}, feature propagation~\citep{yang2023pmlp}, causality~\citep{wu2024graph}, and training dynamics~\citep{yang2024graph}. 
However, most of these works focus on the distribution shifts of features and labels.
In many critical real-world settings, the training and testing graph topologies can be generated from different distributions (e.g., molecular structures with diverse drug likeness)~\citep{ood-bench1,hu2021ogb,bazhenov2023evaluating,zhang2023artificial}, a phenomenon we refer to as {\em ``topological distribution shift''}. This can be a predominant nature of non-Euclidean data in contrast with commonly studied feature and label shifts in Euclidean space. Despite its practical significance, how to enable graph learning models to generalize under topological shifts still remains unclear.
%


In this paper, we aim to address generalization under topological shifts through the lens of a physics-inspired graph learning model dubbed as Advective Diffusion Transformer (\model). The model is derived from advective diffusion equations which describe a class of continuous message passing process with observed and latent topological structures. On top of this, we connect advective diffusion with a graph Transformer architecture for generalization against topological shifts (Fig.~\ref{fig:model}): the non-local diffusion term (instantiated as global attention) aims to capture latent interactions learned from data; the advection term (instantiated as local message passing) accommodates the topological features pertaining to observed graphs.

To justify the model designs, we show that the closed-form solution of this advective diffusion system possesses the capability to control the generalization error caused by topological shifts, which further guarantees the desired level of generalization. In contrast, commonly used local diffusion models that can be considered as a simplified variant of our model leads to the generalization error whose upper bound exponentially grows with topological shifts.

\begin{figure}
    \centering
    \includegraphics[width=0.48\textwidth]{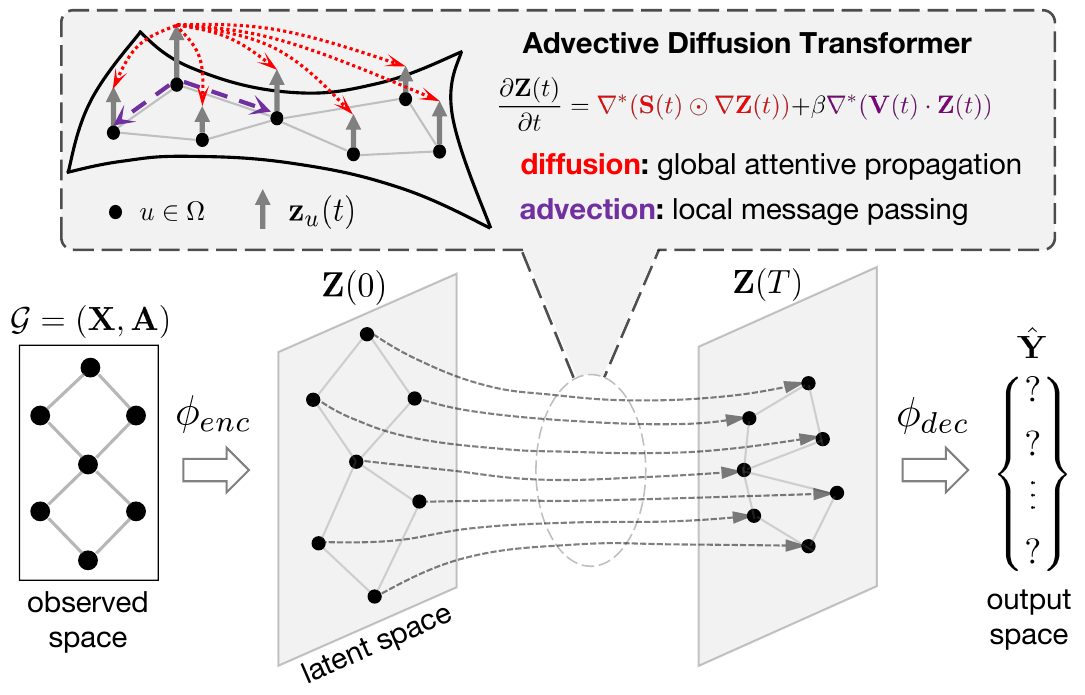}
    \caption{Illustration of Advective Diffusion Transformers. 
    }
    \label{fig:model}
\end{figure} 

For implementation, we resort to numerical scheme based on the Pad\'{e}-Chebyshev theory~\citep{golub1989matrix} for efficiently computing the diffusion equation's closed-form solution. 
Experiments show that our model offers superior generalization performance across various downstream predictive tasks in 
diverse domains, including information networks, molecular screening, and protein interactions.

\section{Background and Preliminaries}\label{sec:background}

We recapitulate diffusion equations on manifolds~\citep{freidlin1993diffusion,medvedev2014nonlinear} and their connection with graph learning. 

\textbf{Diffusion on Riemannian manifolds.} 
Let $\Omega$ denote an abstract domain, which we assume here to be a Riemannian manifold~\citep{eells1964harmonic}. A key feature distinguishing an $n$-dimensional Riemannian manifold from a Euclidean space is the fact that it is only {\em locally} Euclidean, in the sense that at every point $u\in\Omega$ one can construct $n$-dimensional Euclidean {\em tangent space} $T_u\Omega \cong \mathbb{R}^n$ that locally models the structure of $\Omega$. The collection of such spaces (referred to as the {\em tangent bundle} and denoted by $T\Omega$) is further equipped with a smoothly-varying inner product ({\em Riemannian metric}). 

Now consider some quantity (e.g., temperature) as a function of the form $q:\Omega \rightarrow \mathbb{R}$, which we refer to as a {\em scalar field}. 
Similarly, we can define a {\em (tangent) vector field} $Q:\Omega \rightarrow T\Omega$, associating to every point $u$ on a manifold a tangent vector $Q(u)\in T_u\Omega$, which can be thought of as a local infinitesimal displacement. 
We use $\mathcal Q(\Omega)$ and $\mathcal Q(T\Omega)$ to denote the functional spaces of scalar and vector fields, respectively. The {\em gradient} operator $\nabla: \mathcal Q(\Omega) \rightarrow \mathcal Q(T\Omega)$ takes scalar fields into vector fields representing the local direction of the steepest change of the field. 
The {\em divergence} operator is the adjoint of the gradient and maps in the opposite direction, 
%
$\nabla^*: \mathcal Q(T\Omega) \rightarrow \mathcal Q(\Omega)$. 

A manifold diffusion process models the evolution of a quantity (e.g., chemical concentration)
due to its difference across spatial locations on $\Omega$. 
Denoting by $q(u, t): \Omega \times [0, \infty) \rightarrow \mathbb R$ the quantity over time $t$, 
%
%
the process is described by a PDE ({\em diffusion equation})~\citep{romeny2013geometry}:
\begin{equation}\nonumber
    \frac{\partial q(u,t)}{\partial t} = \nabla^*\left( S(u, t) \odot \nabla  q(u,t)\right), ~~t\geq 0, u\in \Omega, 
\end{equation}
with initial conditions $q(u,0) = q_0(u)$
and possibly additional boundary conditions if $\Omega$ has a boundary. $S$ denotes the {\em diffusivity} of the domain. It is typical to distinguish between an {\em isotropic} (location-independent diffusivity),  
{\em non-homogeneous} (location-dependent diffusivity $S=s(u)\in \mathbb{R}$), and 
{\em anisotropic} (location- and direction-dependent $S(u) \in \mathbb{R}^{n\times n}$) settings. 
In the cases studied below, we assume the dependence of diffusivity on locations is via a function of the quantity itself, i.e., $S=S(q(u,t))$. 


\textbf{Diffusion on Graphs.} Recent works adopt diffusion equations as a foundation principle for graph representation learning~\citep{grand,beltrami,GRAND++,bodnar2022neural,choi2023gread,rusch2022gradient,wu2024learning}, employing analogies between calculus on manifolds and graphs. 
Let $\mathcal G = (\mathcal V, \mathcal E)$ be a graph with nodes $\mathcal V$ and edges $\mathcal E$, represented by the $|\mathcal V|\times |\mathcal V|$ {\em adjacency matrix} $\mathbf{A}$. Let $\mathbf{X} 
= [\mathbf x_u]_{u\in \mathcal V}$ denote a $|\mathcal V|\times D$ matrix of node features, analogous to scalar fields on manifolds.
The graph gradient $(\nabla \mathbf{X})_{uv} = \mathbf{x}_v - \mathbf{x}_u$ defines edge features for $(u, v)\in \mathcal E$, analogous to vector fields on manifolds. Similarly, the graph divergence of edge features $\mathbf{E} = [\mathbf e_{uv}]_{(u, v) \in \mathcal E}$, defined as the adjoint $(\nabla^* \mathbf{E})_u = \sum_{v: (u,v)\in \mathcal{E} } \mathbf{e}_{uv}$, produces node features.  
Diffusion models replace discrete GNN layers with continuous time-evolving node embeddings $\mathbf Z(t) = [\mathbf z_u(t)]$, 
where $\mathbf z_u(t): [0, \infty) \rightarrow \mathbb R^d$ driven by the diffusion equation:
%
\begin{equation}\label{eqn-diff-eqn-graph}
    \frac{\partial \mathbf Z(t)}{\partial t} = \nabla^*\left( \mathbf S(\mathbf Z(t); \mathbf A) \odot \nabla  \mathbf Z(t)\right), ~~t\geq 0, 
\end{equation}
with initial conditions $\mathbf Z(0) = \phi_{enc}(\mathbf X)$ 
where $\phi_{enc}$ is a node-wise MLP encoder and
w.l.o.g., the diffusivity $\mathbf S(\mathbf Z(t); \mathbf A)$ over the graph can be defined as a $|\mathcal V|\times |\mathcal V|$ matrix-valued function dependent on $\mathbf A$, which measures the rate of information flows between node pairs. With the graph gradient and divergence, Eqn.~\ref{eqn-diff-eqn-graph} becomes
\begin{equation}\label{eqn-grand}
    \frac{\partial \mathbf Z(t)}{\partial t} = (\mathbf C(\mathbf Z(t); \mathbf A) - \mathbf I)\mathbf Z(t), ~~ 0 \leq t \leq T, 
\end{equation}
with initial conditions $\mathbf Z(0) = \phi_{enc}(\mathbf X)$
where $\mathbf C(\mathbf Z(t); \mathbf A)$ is a $|\mathcal V|\times |\mathcal V|$ coupling matrix associated with the diffusivity. Eqn.~\ref{eqn-grand} yields a dynamics from $t=0$ to an arbitrary given stopping time $T$, where the latter yields node representations for prediction, e.g., $\hat{\mathbf Y} = \phi_{dec}(\mathbf Z(T))$. The coupling matrix determines the interactions between different nodes in the graph, and its common instantiations include normalized graph adjacency (non-parametric) and learnable attention matrix (parametric), in which cases the finite-difference numerical iterations for solving Eqn.~\ref{eqn-grand} correspond to the discrete propagation layers of common GNNs~\citep{grand} and graph Transformers~\citep{difformer,difformer2} (see Appendix~\ref{appendix-discrete} for details).

It is typical to tacitly make a \emph{closed-world} assumption, i.e., the graph topologies of training and testing data are generated from the same distribution. 
However, the challenge of generalization arises when the testing topology is different from the training one. In such an \emph{open-world} regime, it still remains unclear how graph diffusion equations and, more broadly, learning-based models on graphs (e.g., GNNs) extrapolate and generalize to new unseen structures.

\section{Generalization by Advective Diffusion}\label{sec:discuss}

As a prerequisite for analyzing the generalization behaviors of learning-based models on graphs, we need to characterize how topological shifts occur in nature. In general sense, extrapolation is impossible without any exposure to the new data or prior knowledge about the data-generating mechanism. 
In our work, we assume testing data is strictly unknown during training, in which case structural assumptions become necessary for authorizing generalization. 

\subsection{Problem Formulation: Data Generation Hypothesis}\label{sec:discuss-scm}

\begin{figure}
\centering
\includegraphics[width=0.45\textwidth]{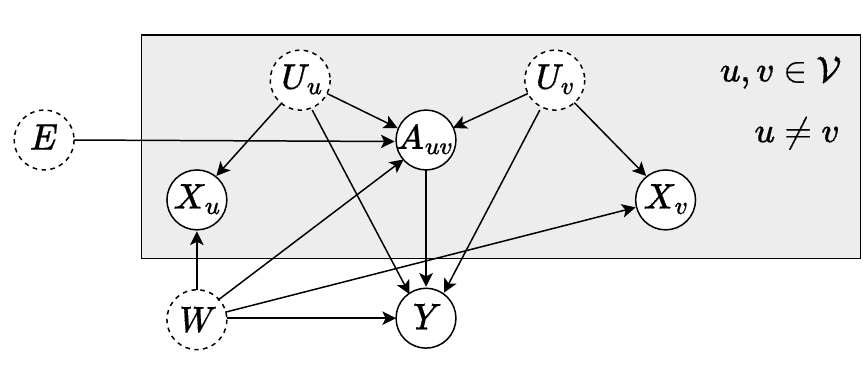}
\caption{The data-generating mechanism with topological shifts caused by environment $E$. The solid (resp. dashed) nodes represents observed (resp. latent) random variables.}
\label{fig:scm}
\end{figure}

We present the causal mechanism of graph data generation in Fig.~\ref{fig:scm} as a hypothesis, inspired by the graph limits~\citep{lovasz2006limits,medvedev2014nonlinear} and random graph models~\citep{snijders1997estimation}. In graph theory, the topology of a graph $\mathcal G = (\mathcal V, \mathcal E)$ can be assumed to be generated by a {\em graphon} (or continuous graph limit), a random symmetric measurable function $W: [0,1]^2 \rightarrow [0,1]$, which is an unobserved latent variable. In our work, we generalize this 
data-generating mechanism to include alongside graph adjacency 
also node features 
and labels: 

\textbf{i)} Each node $u\in \mathcal V$ has a latent i.i.d. variable $U_u\sim U[0, 1]$. 
The {\em node features} are a random variable $X=[X_u]$  generated from each $U_u$ through a certain node-wise function $X_u = g(U_u; W)$. 
We denote by matrix $\mathbf{X}$ a particular realization of the random variable $X$.


\textbf{ii)} Similarly, the {\em graph adjacency} $A = [A_{uv}]$ 
is a random variable generated through a pairwise function $A_{uv} = h(U_u, U_v; W, E)$ additionally dependent on the {\em environment} $E$. The change of  $E$ happens when it transfers from training to testing, resulting in a different distribution of $A$. We denote by  $\mathbf{A}$ a particular realization of the adjacency matrix.

\textbf{iii)} The {\em label} $Y$ can be specified in certain forms. As we assume in below, $Y$ is generated by a function over sets, $Y = r(\{U_{v\in \mathcal V}\}, A; W)$. 
Denote by $\mathbf{Y}$ a realization of $Y$. 


The above process formalizes the data-generating mechanism behind various data of inter-dependent nature, where the graph data $(\mathbf X, \mathbf A, \mathbf Y)$ is generated from the joint distribution $p(X, A, Y|E)$ with a specific environment.
The learning problem boils down to finding parameters $\theta$ of a parametric function $\Gamma_\theta(\mathbf{A}, \mathbf{X})$ 
that establishes the predictive mapping from observed node features $\mathbf X$ and graph adjacency $\mathbf A$ to the label $\mathbf Y$.  
%
$\Gamma_\theta$ is typically implemented as a GNN, which is expected to 
possess sufficient {\em expressive power} (in the sense that $\exists \theta$ such that $\Gamma_\theta(\mathbf{A}, \mathbf{X}) \approx \mathbf{Y}$) as well as {\em generalization capability} under topological shifts (i.e., 
when the observed graph topology varies from training to testing, which in our model amounts to the change in $E$). 
While significant attention in the literature has been devoted to the former property~\citep{expressivity-1,expressivity-2,expressivity-3,expressivity-4,expressivity-5,bodnar2022neural}; the latter is largely an open question. 

\subsection{Proposed Model: Advective Diffusion Transformers}\label{sec:model-background}

To deal with the problem formulated in the previous subsection, we next present a new  diffusion equation model that offers a provable level of generalization in the data-generating mechanism as stipulated in Sec.~\ref{sec:discuss-scm}. The model is inspired by a particular class of diffusion equations, called \emph{advective diffusion}, that describe common physical phenomenons driven by both diffusion and advection effects.

\textbf{Advective Diffusion Equations.} The classic advective diffusion is commonly used for characterizing physical systems with convoluted quantity transfers, where the term \emph{advection} refers to the evolution caused by the movement of the diffused quantity~\citep{chandrasekhar1943stochastic}. Consider the abstract domain $\Omega$ of our interest defined in Sec.~\ref{sec:background}, and assume $V(u, t) \in T_u\Omega$ (a vector field in $\Omega$) to denote the velocity of the particle at location $u$ and time $t$. The advective diffusion of the physical quantity $q$ on $\Omega$ is governed by the PDE as~\citep{leveque1992numerical}: $\frac{\partial q(u,t)}{\partial t} =$
\begin{equation}\label{eqn-advdiff-eqn}
    \nabla^*\left( S(u, t) \odot \nabla  q(u,t)\right) + \beta \nabla^*\left( V(u, t) \cdot q(u,t) \right ),
\end{equation}
where $t\geq 0$, $u\in \Omega$ and $\beta\geq 0$ is a weight for the advection term. For example, if we consider $q(u, t)$ as the water salinity in a river, then Eqn.~\ref{eqn-advdiff-eqn} describes the temporal evolution of salinity at each location that equals to the spatial transfers of both diffusion process (caused by the concentration difference of salt and $S$ reflects the molecular diffusivity in the water) and advection process (caused by the movement of the water and $V$ characterizes the flowing directions). 

\textbf{Advective Diffusion on Graphs.} Similarly, on a graph $\mathcal G = (\mathcal V, \mathcal E)$, we can define the velocity for each node $u$ as a $|\mathcal V|$-dimensional vector-valued function 
$\mathbf V(t) = [\mathbf v_u(t)]$. 
We thus have $(\nabla^*(\mathbf V(t) \cdot \mathbf Z(t)) )_u = \sum_{v \in \mathcal V} v_{uv}(t) \mathbf z_v(t)$ and the advective diffusion equation on graphs:
\begin{equation}\label{eqn-advdiff}
    \frac{\partial \mathbf Z(t)}{\partial t} = \left [ \mathbf C(\mathbf Z(t)) + \beta \mathbf V(t) - \mathbf I \right ]\mathbf Z(t), ~~~ 0 \leq t \leq T.
\end{equation}
We next instantiate the coupling matrix and the velocity to endow the model with generalizability under 
topological shifts, by drawing inspirations from physical phenomenons. 

$\circ$ \emph{Non-local diffusion as global attention}. The diffusion process led by the concentration gradient acts as an internal driving force, where the diffusivity keeps invariant across environments (e.g., the molecular diffusivity stays constant in different rivers). With this intuition in mind, we consider the non-local diffusion operator allowing instantaneous information flows among arbitrary locations~\citep{chasseigne2006asymptotic}. In the context of learning on graphs, the non-local diffusion can be seen as generalizing the feature propagation to a {\em complete} or fully-connected (latent) graph~\citep{difformer,difformer2}, in contrast with common GNNs that allow message passing only between neighboring nodes. Formally speaking, we can define the gradient and divergence operators on a complete graph: $(\nabla \mathbf{X})_{uv} = \mathbf{x}_v - \mathbf{x}_u$ ($u, v\in \mathcal V$) and $(\nabla^* \mathbf{E})_u = \sum_{v\in \mathcal{V} } \mathbf{e}_{uv}$ ($u\in \mathcal V$). This resonates with the latent interactions among nodes, determined by the underlying data manifold, that induce all-pair information flows over a complete graph and stay invariant w.r.t. the change of $E$. We thus instantiate $\mathbf C$ as a global attention matrix that computes the similarities between arbitrary node pairs:
\begin{equation}
    \mathbf C = [c_{uv}]_{u, v\in \mathcal V}, ~~c_{uv} = \frac{\eta(\mathbf z_u(0), \mathbf z_v(0))}{\sum_{w\in \mathcal V} \eta(\mathbf z_u(0), \mathbf z_w(0))},
\end{equation}
where $\eta$ is a learnable pairwise similarity function.

$\circ$ \emph{Advection as local message passing}. The advection process driven by the directional movement belongs to an external force, with the velocity depending on contexts (e.g., different rivers). This is analogous to the environment-sensitive graph topology that is informative for prediction in specific environments. We instantiate the velocity as the normalized graph adjacency, i.e., $\mathbf V = \mathbf D^{-1/2} \mathbf A \mathbf D^{-1/2}$, reflecting observed structural information, where $\mathbf D$ is the diagonal degree matrix of $\mathbf A$. Then our advective diffusion model can be formulated as:
\begin{equation}\label{eqn-advdiff-ours}
    \frac{\partial \mathbf Z(t)}{\partial t} = \left [ \mathbf C + \beta \mathbf V - \mathbf I \right ]\mathbf Z(t), ~~~ 0 \leq t \leq T, 
\end{equation}
with initial conditions $\mathbf Z(0) = \phi_{enc}(\mathbf X)$ 
where 
$\beta\in [0, 1]$ is a hyper-parameter. 
The integration of non-local diffusion (implemented through attention) and advection (implemented as MPNNs) give rise to a new architecture, which we call {\em Advective Diffusion Transformer} (\model).

\emph{\textbf{Remark.}} Eqn.~\ref{eqn-advdiff-ours} has a closed-form solution $\mathbf Z(t) = e^{-(\mathbf I - \mathbf C - \beta \mathbf V)t} \mathbf Z(0)$.
A special case of $\beta=0$ (no advection) can be used in situations where the graph structure is not useful. Moreover, one can extend Eqn.~\ref{eqn-advdiff-ours} to a non-linear equation with time-dependent $\mathbf C(\mathbf Z(t))$, in which case the equation has no closed-form solution and needs numerical schemes for solving. Similarly to \cite{di2022graph}, we found in our experiments a simple linear diffusion to be sufficient to yield promising performance. We therefore leave the study of the non-linear variant for the future.

\subsection{Theoretical Justification}\label{sec:model-analysis}

We proceed to analyze the generalization capability of our proposed model w.r.t. topological shifts as defined in Sec~\ref{sec:discuss-scm}. 
We are interested in the generalization error of $\Gamma_\theta$ instantiated as the continuous diffusion model in Eqn.~\ref{eqn-advdiff-ours}, when transferring from training data generated with the environment $E_{tr}$ to testing data generated with $E_{te}$. The latter causes varied graph topologies as stipulated in Sec.~\ref{sec:discuss-scm}.

We denote by $\{(\mathbf X^{(i)}, \mathbf A^{(i)}, \mathbf Y^{(i)})\}_{i}^{N_{tr}}$ the training data set sized $N_{tr}$ generated from $p(X, A, Y|E=E_{tr})$, and $l(\cdot, \cdot)$ any bounded loss function. The training error (i.e., empirical risk) can be defined as $\mathcal R_{emp}(\Gamma_\theta; E_{tr}) \triangleq$
\begin{equation}
    \frac{1}{N_{tr}} \sum\nolimits_{i=1}^{N_{tr}} l(\Gamma_\theta(\mathbf X^{(i)}, \mathbf A^{(i)}), \mathbf Y^{(i)}).
\end{equation}
Our target is to reduce the generalization error on testing data generated from $p(X, A, Y|E=E_{te})$: $\mathcal R(\Gamma_\theta; E_{te}) \triangleq$
\begin{equation}
    \mathbb E_{(\mathbf X', \mathbf A', \mathbf Y')\sim p(X, A, Y|E=E_{te})} [l(\Gamma_\theta(\mathbf X', \mathbf A'), \mathbf Y')].
\end{equation}
Particularly, if $E_{te} = E_{tr}$, the learning setting degrades to the standard one commonly studied in the closed-world assumption, wherein the in-distribution generalization error has an upper bound~\citep{shalev2014understanding}: 
\begin{equation}\label{eqn-indis-error}
\begin{split}
    \mathcal R(\Gamma_\theta; E_{tr}) - \mathcal R_{emp}(\Gamma_\theta; E_{tr}) \leq  \mathcal D_{in}(\Gamma_\theta, E_{tr}, N_{tr}) \\
    = 2 \mathcal H(\Gamma_\theta) + O\left ( \sqrt{\log (1/\delta) /N_{tr}}  \right ),
\end{split}
\end{equation}
where $\mathcal H(\Gamma_\theta)$ denotes the Rademacher complexity of the function class induced by $\Gamma_\theta$, and $\mathcal D_{in}(\Gamma_\theta, E_{tr}, N_{tr})$ is determined by training data size and model complexity.

When $E_{te} \neq E_{tr}$ that occurs in the open-world regime, i.e., our focused learning setting, the analysis becomes more difficult due to the topological shifts. In the diffusion equation (either Eqn.~\ref{eqn-advdiff-ours} or ~\ref{eqn-grand}), the change of graph topologies leads to the change of node representations (solution of the diffusion equation $\mathbf Z(T)$). Thereby, the output of the diffusion process can be expressed as $\mathbf Z(T; \mathbf A) = f(\mathbf Z(0),\mathbf{A})$. Our first result below decouples the out-of-distribution generalization gap $\mathcal R(\Gamma_\theta; E_{te}) - \mathcal R_{emp}(\Gamma_\theta; E_{tr})$ into three error terms.
\begin{theorem}\label{thm-main}
    Assume $l$ and $\phi_{dec}$ are Lipschitz continuous. For any graph data generated with the mechanism of Sec.~\ref{sec:discuss-scm}, it holds with the probability $1 - \delta$ that
    the generalization gap of $\Gamma_\theta$ satisfies
    \begin{equation}\nonumber
    \begin{split}
        &|\mathcal R(\Gamma_\theta; E_{te}) - \mathcal R_{emp}(\Gamma_\theta; E_{tr})| \leq \mathcal D_{in}(\Gamma_\theta, E_{tr}, N_{tr}) \\
        & + \underbrace{O(\mathbb E_{\mathbf A \sim p(A|E_{tr}), \mathbf A' \sim p(A|E_{te})} [ \| \mathbf Z(T; \mathbf A') - \mathbf Z(T; \mathbf A) \|_2])}_{\mathcal D_{ood-model}(\Gamma_\theta, E_{tr}, E_{te})} \\
        & + \underbrace{O(\mathbb E_{(\mathbf A, \mathbf Y) \sim p(A, Y|E_{tr}), (\mathbf A', \mathbf Y') \sim p(A, Y|E_{te})} \left [ \| \mathbf Y' - \mathbf Y \|_2 \right ])}_{\mathcal D_{ood-label}(E_{tr}, E_{te})}.
    \end{split}
    \end{equation}
\end{theorem}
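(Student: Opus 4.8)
The plan is to insert the in-distribution population risk $\mathcal R(\Gamma_\theta; E_{tr})$ as a pivot and split the gap by the triangle inequality:
\[
|\mathcal R(\Gamma_\theta; E_{te}) - \mathcal R_{emp}(\Gamma_\theta; E_{tr})| \le |\mathcal R(\Gamma_\theta; E_{tr}) - \mathcal R_{emp}(\Gamma_\theta; E_{tr})| + |\mathcal R(\Gamma_\theta; E_{te}) - \mathcal R(\Gamma_\theta; E_{tr})| .
\]
The first term on the right is the classical in-sample generalization gap; since $l$ is bounded it is at most $\mathcal D_{in}(\Gamma_\theta, E_{tr}, N_{tr})$ with probability at least $1-\delta$ by the standard Rademacher-complexity argument recalled in Eqn.~\ref{eqn-indis-error}. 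All the remaining work is to control the population-level shift term $|\mathcal R(\Gamma_\theta; E_{te}) - \mathcal R(\Gamma_\theta; E_{tr})|$, which contributes no additional failure probability.

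For that term I would use the structure of the data-generating hypothesis of Sec.~\ref{sec:discuss-scm}: the latent node variables $U_u\sim U[0,1]$ and the graphon $W$ are shared across environments, and only the adjacency law $h(\cdot,\cdot;W,E)$ --- and, through $A$, the label $Y=r(\{U_v\},A;W)$ --- depends on $E$, whereas $X_u=g(U_u;W)$ does not. This lets me realize $p(X,A,Y|E_{tr})$ and $p(X,A,Y|E_{te})$ on a common probability space from a single draw of $(\{U_v\},W)$, i.e.\ a coupling under which $\mathbf X=\mathbf X'$ almost surely and whose $(A,Y)$-marginals are $p(A,Y|E_{tr})$ and $p(A,Y|E_{te})$. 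Writing each population risk as the marginal expectation of the loss and passing to this coupling, the shift term is at most $\mathbb E[\,|l(\Gamma_\theta(\mathbf X,\mathbf A'),\mathbf Y') - l(\Gamma_\theta(\mathbf X,\mathbf A),\mathbf Y)|\,]$.

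The last step peels off the two remaining discrepancies via Lipschitzness. Let $L_l$ be a Lipschitz constant of $l$ in both arguments, so the integrand above is at most $L_l\big(\|\Gamma_\theta(\mathbf X,\mathbf A') - \Gamma_\theta(\mathbf X,\mathbf A)\|_2 + \|\mathbf Y'-\mathbf Y\|_2\big)$. For the model term, recall $\Gamma_\theta(\mathbf X,\mathbf A)=\phi_{dec}(\mathbf Z(T;\mathbf A))$ with $\mathbf Z(T;\mathbf A)=f(\mathbf Z(0),\mathbf A)$ and $\mathbf Z(0)=\phi_{enc}(\mathbf X)$ independent of the adjacency, so the whole dependence on topology is funnelled through $\mathbf Z(T;\cdot)$; Lipschitzness of $\phi_{dec}$ with constant $L_{dec}$ gives $\|\Gamma_\theta(\mathbf X,\mathbf A') - \Gamma_\theta(\mathbf X,\mathbf A)\|_2 \le L_{dec}\,\|\mathbf Z(T;\mathbf A') - \mathbf Z(T;\mathbf A)\|_2$. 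Taking expectations over the coupling, absorbing $L_l$ and $L_{dec}$ into the $O(\cdot)$, and reading off the marginals, the shift term is bounded by $\mathcal D_{ood-model}(\Gamma_\theta,E_{tr},E_{te}) + \mathcal D_{ood-label}(E_{tr},E_{te})$; combined with the bound on the in-sample term this yields the claim with probability $1-\delta$.

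I expect the conceptual crux --- and the step most in need of care --- to be the coupling in the shift term: one must justify that the feature channel is environment-invariant so that the training and testing instances can be generated on a shared probability space with $\mathbf X=\mathbf X'$, which is precisely what isolates the topology-induced error inside $\mathbf Z(T)$ rather than conflating it with a feature shift (and also why no separate feature-shift term appears). The Lipschitz bookkeeping and the appeal to the standard in-distribution bound are routine by comparison.
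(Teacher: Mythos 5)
Your proposal is correct and follows essentially the same route as the paper: pivot on $\mathcal R(\Gamma_\theta; E_{tr})$ via the triangle inequality, invoke the standard Rademacher bound for the in-sample term, and control the population shift term by coupling the two environments on a shared draw of $(\{U_v\}, W)$ (hence shared $\mathbf X$), then peel off the adjacency and label discrepancies with the Lipschitz constants of $l$ and $\phi_{dec}$. The paper realizes your integrand-level split by inserting the intermediate expectation $\mathbb E[l(\Gamma_\theta(\mathbf X,\mathbf A),\mathbf Y')]$, which is the same decomposition written at the level of expectations rather than inside them.
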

\emph{\textbf{Remark}}. 
Since $\mathcal D_{in}$ is independent of the testing data generated with $E_{te}\neq E_{tr}$, the impact of topological shifts on the out-of-distribution generalization error is largely dependent on $D_{ood-model}$ and $D_{ood-label}$: the former reflects the variation magnitude of $\mathbf Z(T; \mathbf A)$ yielded by $\Gamma_\theta$ w.r.t. varying topologies; the latter measures the difference of labels generated with different environments. Notice that $D_{ood-label}$ is fully determined by the data-generating mechanism, while $D_{ood-model}$ is mainly dependent on the model $\Gamma_\theta$, particularly the sensitivity of node representations w.r.t. topological shifts. We thus next zoom in on the specific design of $\Gamma_\theta$ as Eqn.~\ref{eqn-advdiff-ours},
and the next result shows the upper bound of the change rate of $\mathbf Z(T; \mathbf A)$ w.r.t. variation of graph topologies. 
\begin{theorem}\label{thm-ours}
    For any graph data generated with the mechanism of Sec.~\ref{sec:discuss-scm}, if $g$ is injective, then the model Eqn.~\ref{eqn-advdiff-ours} can reduce the variation magnitude of the node representation $\|\mathbf Z(T; \mathbf A') - \mathbf Z(T; \mathbf A) \|_2$ to any order $O(\psi( \| \Delta \tilde{\mathbf A} \|_2 ))$ where $\psi$ denotes an arbitrary polynomial function, $\Delta \tilde{\mathbf A} = \tilde{\mathbf A}' - \tilde{\mathbf A}$ and $\tilde{\mathbf A} = \mathbf D^{-1/2} \mathbf A \mathbf D^{-1/2}$.
\end{theorem}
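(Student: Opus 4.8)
The plan is to work directly from the closed-form solution $\mathbf Z(T;\mathbf A) = e^{-(\mathbf I - \mathbf C - \beta\mathbf V)T}\mathbf Z(0)$ and to exploit the structure of the data-generating mechanism of Sec.~\ref{sec:discuss-scm}. The conceptually central first step is to observe that neither $\mathbf Z(0) = \phi_{enc}(\mathbf X)$ nor the attention matrix $\mathbf C$ — which is a function of $\mathbf Z(0)$ only — depends on the observed adjacency: in the hypothesized mechanism the node features are generated node-wise as $\mathbf x_u = g(U_u;W)$ from the environment-independent latents $U_u$, so a topological shift (a change of $E$) leaves both $\mathbf Z(0)$ and $\mathbf C$ invariant. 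Injectivity of $g$ is used precisely here: it guarantees that $\mathbf x_u$ determines $U_u$, hence that $\phi_{enc}$ and the similarity function $\eta$ can be chosen so that $\mathbf C$ realizes the environment-invariant latent interactions, while the model — $\mathbf C$ together with the advection term $\beta\tilde{\mathbf A}$ — still fits the training labels $Y = r(\{U_v\},\mathbf A;W)$; i.e.\ decoupling $\mathbf C$ from $\mathbf A$ costs no expressive power. We may therefore take the same $\mathbf C$ and $\mathbf Z(0)$ for $\mathbf A$ and $\mathbf A'$, so the entire topological dependence of $\mathbf Z(T;\cdot)$ sits in the single term $\beta\mathbf V = \beta\tilde{\mathbf A}$ inside the exponential.

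Next, write $\mathbf P = \mathbf C + \beta\tilde{\mathbf A} - \mathbf I$ and $\mathbf P' = \mathbf P + \beta\Delta\tilde{\mathbf A}$, so that $\mathbf Z(T;\mathbf A') - \mathbf Z(T;\mathbf A) = (e^{T\mathbf P'} - e^{T\mathbf P})\mathbf Z(0)$, and expand $e^{T\mathbf P'} - e^{T\mathbf P}$ by the Duhamel/Dyson series in powers of the perturbation $\beta\Delta\tilde{\mathbf A}$. The $k$-th term is a nested integral over the simplex $\{0\le s_1\le\cdots\le s_k\le T\}$ of a product of $k$ copies of $\beta\Delta\tilde{\mathbf A}$ interleaved with $k+1$ semigroup factors $e^{s\mathbf P}$, so it has spectral norm at most $L^{k+1}(\beta T)^k\|\Delta\tilde{\mathbf A}\|_2^k/k!$, where $L := \sup_{0\le s\le T}\|e^{s\mathbf P}\|_2$. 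A short auxiliary estimate shows $L$ is finite and bounded uniformly over topologies — $\mathbf I - \mathbf C$ generates a bounded (averaging) semigroup since $\mathbf C$ is row-stochastic, $\beta\tilde{\mathbf A}$ is a bounded perturbation with $\|\beta\tilde{\mathbf A}\|_2\le\beta\le1$, and in any case $L\le e^{T(\|\mathbf C\|_2+1+\beta)}$ already suffices for an asymptotic-in-$\|\Delta\tilde{\mathbf A}\|_2$ claim; one also uses $\|\Delta\tilde{\mathbf A}\|_2\le\|\tilde{\mathbf A}'\|_2+\|\tilde{\mathbf A}\|_2\le 2$.

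Summing the series gives $\|\mathbf Z(T;\mathbf A') - \mathbf Z(T;\mathbf A)\|_2 \le \|\mathbf Z(0)\|_2 L(e^{L\beta T\|\Delta\tilde{\mathbf A}\|_2}-1) = \sum_{k\ge1} c_k\|\Delta\tilde{\mathbf A}\|_2^k$ with $c_k = \|\mathbf Z(0)\|_2 L^{k+1}(\beta T)^k/k!$, a convergent power series all of whose coefficients vanish as $\beta\to0$, the leading one being $c_1 = O(\beta)$. Truncating at any fixed order $K$ (e.g.\ matching the Pad\'{e}--Chebyshev numerical scheme used for implementation) and absorbing the geometrically small tail over $\|\Delta\tilde{\mathbf A}\|_2\in[0,2]$, the bound becomes a polynomial in $\|\Delta\tilde{\mathbf A}\|_2$; and choosing $\beta$ small enough pushes every coefficient below that of any prescribed polynomial $\psi$, which is exactly the claimed $O(\psi(\|\Delta\tilde{\mathbf A}\|_2))$. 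The contrast with Eqn.~\ref{eqn-grand} is then immediate: there $\mathbf C = \mathbf C(\mathbf Z(t);\mathbf A)$ is itself topology-dependent, the perturbation cannot be isolated in one $\beta$-scaled term, and preserving expressivity forbids shrinking the relevant scale, so the analogous expansion has coefficients bounded away from $0$ and the bound grows like $e^{\Theta(\|\Delta\tilde{\mathbf A}\|_2)}$.

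The \textbf{main obstacle} is the first step: making rigorous that injectivity of $g$ permits decoupling $\mathbf C$ from $\mathbf A$ \emph{without} sacrificing the ability to represent $Y = r(\{U_v\},\mathbf A;W)$ — i.e.\ that the environment-invariant attention together with the residual advection term still spans the needed target mapping, which is where the interplay between $g$, $\phi_{enc}$, $\eta$ and $\beta\tilde{\mathbf A}$ must be pinned down. The remainder is routine bookkeeping: verifying the Dyson-series norm bound, establishing the topology-uniform semigroup estimate for the (generally non-normal) generator $\mathbf P$, and clarifying the exact sense of ``any polynomial order'' — namely that the Dyson coefficients can be driven to $0$ simultaneously through $\beta$.
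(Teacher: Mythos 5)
Your Duhamel/Dyson machinery is sound and is essentially the same tool the paper uses (its auxiliary lemma is the first-order integral perturbation bound for the matrix exponential, giving $\|e^{-\mathbf L'T}-e^{-\mathbf LT}\|_2 \le M^2T\,\|\mathbf L'-\mathbf L\|_2\,e^{\omega T\|\mathbf L\|_2}e^{\omega T\|\mathbf L'-\mathbf L\|_2}$). The genuine gap is in the step you yourself flag as the main obstacle: how ``arbitrary polynomial order'' is obtained. Your bound is a power series $\sum_{k\ge1}c_k\|\Delta\tilde{\mathbf A}\|_2^k$ with $c_1=\Theta(\beta)>0$, so it is first order in $\|\Delta\tilde{\mathbf A}\|_2$; it is $O(\psi(\|\Delta\tilde{\mathbf A}\|_2))$ only for polynomials $\psi$ with a nonzero linear part, and for $\psi(x)=x^m$ with $m\ge2$ the only way to kill $c_1$ is $\beta\to0$. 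That deletes the advection term and collapses the model to pure non-local diffusion --- exactly the degenerate case the theorem is meant to improve upon, since the whole point is to retain the structural term (for expressivity) while still controlling sensitivity. Shrinking the constant in a linear bound does not change its order, so your route cannot deliver the stated conclusion.

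The missing idea is that injectivity of $g$ is used in the opposite direction from your reading. You use it to argue that $\mathbf C$ \emph{need not} depend on $\mathbf A$ --- but that is already true by construction, since $\mathbf C$ is computed from $\mathbf Z(0)=\phi_{enc}(\mathbf X)$. The paper instead uses injectivity to argue that $\mathbf C$ \emph{can} depend on $\mathbf A$: because $\mathbf x_u=g(U_u;W)$ is invertible and $a_{uv}=h(U_u,U_v;W,E)$, the composition $m\circ h\circ g^{-1}$ maps features to any desired function of the adjacency, and by universal approximation $\eta\circ\phi_{enc}$ can realize it. One then \emph{constructs} the attention as $\mathbf C=\overline{\mathbf C}-(\beta+\epsilon)\mathbf V$ with $\overline{\mathbf C}$ adjacency-independent and $\epsilon>0$ free, so that $\mathbf C+\beta\mathbf V=\overline{\mathbf C}-\epsilon\mathbf V$: the attention reconstructs $\tilde{\mathbf A}$ from the features and cancels the advection term up to an arbitrarily small residual. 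The generator's dependence on the topology is then $\|\mathbf L'-\mathbf L\|_2=\epsilon\cdot O(\|\Delta\tilde{\mathbf A}\|_2)$ with $\epsilon$ a free parameter and $\beta$ held fixed, and plugging this into the exponential perturbation bound is what yields the arbitrary-order claim. Without this construction (or some substitute showing the trained attention can absorb the adjacency dependence), the first step of your plan does not go through, and the remainder, however carefully bookkept, proves only a linear-order bound with a tunable constant --- which, as you note in passing, the local diffusion model of Proposition~\ref{prop-1} also satisfies in the small-$\|\Delta\tilde{\mathbf A}\|_2$ regime, so no separation would be established.
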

This suggests that the advective diffusion model is capable of controlling the change rate of node representations to arbitrary rates w.r.t. $\| \Delta \tilde{\mathbf A} \|_2$. The injectivity of $g$ is a mild condition since $g$ establishes a mapping from a smooth and compact latent space to a high-dimensional space. Applying Theorem~\ref{thm-main} we have the generalization error of the advective diffusion model.
\begin{corollary}\label{coro-ours}
    On the same condition of Theorem~\ref{thm-main} and \ref{thm-ours}, the model-dependent  generalization error bound of Eqn~\ref{eqn-advdiff-ours} can be reduced to arbitrary polynomial orders w.r.t. topological shifts, i.e., $\mathcal D_{ood-model}(\Gamma_\theta, E_{tr}, E_{tr}) = O(\mathbb E_{\mathbf A \sim p(A|E_{tr}), \mathbf A' \sim p(A|E_{te})} [ \psi( \| \Delta \tilde{\mathbf A} \|_2 ) ])$.
\end{corollary}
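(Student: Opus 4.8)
\textbf{Proof plan for Corollary~\ref{coro-ours}.}
The plan is to obtain the corollary by directly composing the two preceding results. Theorem~\ref{thm-main} already isolates the model-dependent term as $\mathcal D_{ood-model}(\Gamma_\theta, E_{tr}, E_{te}) = O(\mathbb E_{\mathbf A \sim p(A|E_{tr}), \mathbf A' \sim p(A|E_{te})}[\|\mathbf Z(T;\mathbf A') - \mathbf Z(T;\mathbf A)\|_2])$, and Theorem~\ref{thm-ours} controls the integrand pointwise for the instantiation in Eqn.~\ref{eqn-advdiff-ours}. So the only work is to transfer the pointwise bound of Theorem~\ref{thm-ours} under the expectation defining $\mathcal D_{ood-model}$.

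First I would fix an arbitrary target polynomial $\psi$. Without loss of generality I may assume $\psi$ has nonnegative coefficients, since $\|\Delta\tilde{\mathbf A}\|_2 \ge 0$ and every polynomial is dominated on $[0,\infty)$ by the polynomial obtained by taking absolute values of its coefficients, which is moreover nondecreasing there. By Theorem~\ref{thm-ours}, using injectivity of $g$, there is a configuration of the model parameters (the attention coupling $\mathbf C$, the advection weight $\beta$, and the stopping time $T$) for which $\|\mathbf Z(T;\mathbf A') - \mathbf Z(T;\mathbf A)\|_2 \le c\,\psi(\|\Delta\tilde{\mathbf A}\|_2)$ holds for every pair of realizations $(\mathbf A,\mathbf A')$, with a constant $c$ that depends only on the model hyperparameters and on data-generating quantities that are uniform over the support of $p(A|E)$, not on the particular adjacency matrices.

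Next I would take expectations over $\mathbf A \sim p(A|E_{tr})$ and $\mathbf A' \sim p(A|E_{te})$. Monotonicity and linearity of expectation give $\mathbb E[\|\mathbf Z(T;\mathbf A') - \mathbf Z(T;\mathbf A)\|_2] \le c\,\mathbb E[\psi(\|\Delta\tilde{\mathbf A}\|_2)]$, i.e.\ $\mathbb E[\|\mathbf Z(T;\mathbf A') - \mathbf Z(T;\mathbf A)\|_2] = O(\mathbb E[\psi(\|\Delta\tilde{\mathbf A}\|_2)])$. Substituting this into the expression for $\mathcal D_{ood-model}$ provided by Theorem~\ref{thm-main} yields $\mathcal D_{ood-model}(\Gamma_\theta, E_{tr}, E_{te}) = O(\mathbb E_{\mathbf A \sim p(A|E_{tr}), \mathbf A' \sim p(A|E_{te})}[\psi(\|\Delta\tilde{\mathbf A}\|_2)])$, which is exactly the claim.

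The main obstacle is \emph{uniformity of the constant} $c$ in Theorem~\ref{thm-ours} across realizations of $\mathbf A$ and $\mathbf A'$: the pointwise bound can only be pulled out of the expectation if $c$ does not itself scale with the sampled adjacency matrices. I would address this by tracing the constant through the proof of Theorem~\ref{thm-ours} and verifying it is governed solely by the chosen model parameters and by quantities uniform over $p(A|E)$ — e.g.\ bounds on the node features (hence on $\mathbf Z(0)$ via the encoder and the injective $g$), the number of nodes $|\mathcal V|$, and the spectral norm of $\mathbf C$ — rather than by $\mathbf A$. A secondary, minor point is integrability of $\psi(\|\Delta\tilde{\mathbf A}\|_2)$ under the product environment distribution, which is immediate because $\tilde{\mathbf A}$ is a symmetrically normalized adjacency with $\|\tilde{\mathbf A}\|_2 \le 1$, so $\|\Delta\tilde{\mathbf A}\|_2$ is bounded and all its moments are finite.
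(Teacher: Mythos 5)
Your proposal is correct and follows essentially the same route as the paper, whose proof of this corollary is simply to combine Theorem~\ref{thm-main} with Theorem~\ref{thm-ours} by taking expectations of the pointwise bound. Your additional care about uniformity of the constant across realizations of $\mathbf A, \mathbf A'$ and about integrability of $\psi(\|\Delta\tilde{\mathbf A}\|_2)$ goes beyond what the paper makes explicit, but it is the right thing to check and does not change the argument.
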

This implies that the generalization error of the model in Eqn.~\ref{eqn-advdiff-ours} can be controlled within an arbitrary rate w.r.t. $\| \Delta \tilde{\mathbf A} \|_2$. The model has provable capacity for achieving a desired level of generalization with topological shifts. To verify the efficacy of the model, we consider synthetic data that simulates the topological shifts as defined in Sec.~\ref{sec:discuss-scm} and investigate into three types of topological shifts as shown in Fig.~\ref{fig:res-synthetic} (see experimental details in Sec.~\ref{sec:exp-synthetic}). We found that our model (\modelinv and \modelser whose implementation is presented in Sec.~\ref{sec:implement}) keeps the testing error nearly constant as $\| \Delta \tilde{\mathbf A} \|_2$ increases. 

\begin{figure*}[t!]
    \centering
    \includegraphics[width=\textwidth]{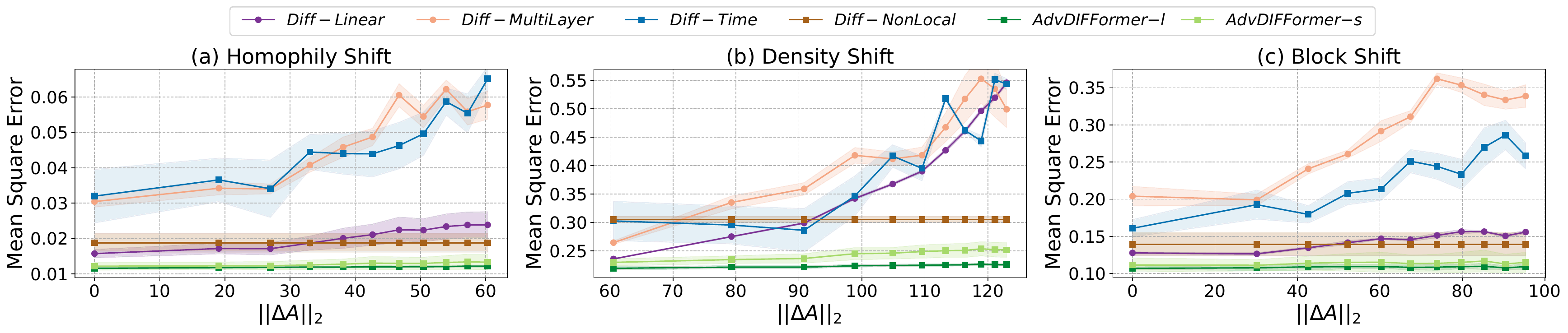}
    \caption{Testing errors (y-axix) w.r.t. differences in graph topologies (x-axis) on synthetic datasets that simulate the topological distribution shifts according to the data generation hypothesis of Fig.~\ref{fig:scm}. 
    }
    \label{fig:res-synthetic}
\end{figure*}

\subsection{Comparisons with Other Models}\label{sec:model-related}

To further illuminate the effectiveness of the proposed model, we next compare with two related models that are commonly adopted and can be considered as the simplified variants of our model. 

\textbf{Local Diffusion Model.} 
We first consider a typical model instantiation, i.e., local diffusion equation on graphs, wherein the model discards the advection term in Eqn.~\ref{eqn-advdiff-ours} and degrades to Eqn.~\ref{eqn-grand} with the coupling matrix dependent on $\mathbf A$. In such a situation, the propagation of node signals is constrained within connected neighbored nodes. The common choice for the coupling matrix is the symmetric normalized graph adjacency matrix $\tilde{\mathbf A} = \mathbf D^{-1/2} \mathbf A \mathbf D^{-1/2}$. In this case, the finite-difference iteration for solving Eqn.~\ref{eqn-grand} would induce the discrete propagation layers akin to the message passing rule of SGC~\citep{SGC-icml19} and GCN~\citep{GCN-vallina} if the feature transformation and non-linearity are neglected (see more illustration in Appendix~\ref{appendix-discrete}). Given the constant coupling matrix $\mathbf C$, Eqn.~\ref{eqn-grand} has a closed-form solution $\mathbf Z(t) = e^{ -(\mathbf I - \mathbf C) t} \mathbf Z(0)$. However, unlike the advective diffusion model, the change rate of $\mathbf Z(T; \mathbf A)$ w.r.t. $\Delta \tilde{\mathbf A} = \tilde{\mathbf A}' - \tilde{\mathbf A}$ produced by the local diffusion model has an exponential upper bound.
\begin{proposition}\label{prop-1}
For local diffusion model (defined by Eqn.~\ref{eqn-grand}) with the coupling matrix $\mathbf C = \mathbf D^{-1/2} \mathbf A \mathbf D^{-1/2}$ or $\mathbf C = \mathbf D^{-1} \mathbf A$, the yielded node representation satisfies
$
\|\mathbf Z(T; \mathbf A') - \mathbf Z(T; \mathbf A) \|_2 = O  (\|\Delta \tilde{\mathbf A}\|_2 \exp{(\|\Delta \tilde{\mathbf A}\|_2 T)}  )
$.
\end{proposition}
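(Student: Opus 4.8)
The plan is to use the explicit closed-form solution $\mathbf{Z}(T;\mathbf{A}) = e^{-(\mathbf{I}-\mathbf{C})T}\mathbf{Z}(0)$. I first treat $\mathbf{C} = \tilde{\mathbf{A}} = \mathbf{D}^{-1/2}\mathbf{A}\mathbf{D}^{-1/2}$, so that the quantity of interest is $\|(e^{-(\mathbf{I}-\tilde{\mathbf{A}}')T} - e^{-(\mathbf{I}-\tilde{\mathbf{A}})T})\mathbf{Z}(0)\|_2$. Since $\mathbf{I}$ commutes with everything, the scalar factor $e^{-T}$ pulls out, leaving $e^{-T}\,\|(e^{T\tilde{\mathbf{A}}'} - e^{T\tilde{\mathbf{A}}})\mathbf{Z}(0)\|_2$; by submultiplicativity it then suffices to bound the operator norm $\|e^{T\tilde{\mathbf{A}}'} - e^{T\tilde{\mathbf{A}}}\|_2$ and multiply by $\|\mathbf{Z}(0)\|_2$, which is a constant of the model.

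The key step is a first-order perturbation estimate for the matrix exponential. Writing $\mathbf{M} = T\tilde{\mathbf{A}}$ and $\mathbf{M}' = T\tilde{\mathbf{A}}'$ and differentiating $s\mapsto e^{s\mathbf{M}'}e^{(1-s)\mathbf{M}}$ gives the identity $e^{\mathbf{M}'} - e^{\mathbf{M}} = \int_0^1 e^{s\mathbf{M}'}(\mathbf{M}'-\mathbf{M})\,e^{(1-s)\mathbf{M}}\,ds$, whence $\|e^{\mathbf{M}'}-e^{\mathbf{M}}\|_2 \le \|\mathbf{M}'-\mathbf{M}\|_2\, e^{\max(\|\mathbf{M}\|_2,\|\mathbf{M}'\|_2)}$. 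Here $\|\mathbf{M}'-\mathbf{M}\|_2 = T\|\Delta\tilde{\mathbf{A}}\|_2$, and since $\tilde{\mathbf{A}}$ is symmetric with spectrum contained in $[-1,1]$ we have $\|\tilde{\mathbf{A}}\|_2 \le 1$, so $\|\mathbf{M}'\|_2 = T\|\tilde{\mathbf{A}}'\|_2 \le T(\|\tilde{\mathbf{A}}\|_2 + \|\Delta\tilde{\mathbf{A}}\|_2) \le T(1+\|\Delta\tilde{\mathbf{A}}\|_2)$. Substituting and reinstating the $e^{-T}$ prefactor cancels the ``$1$'' in the exponent, yielding $\|\mathbf{Z}(T;\mathbf{A}')-\mathbf{Z}(T;\mathbf{A})\|_2 \le \|\mathbf{Z}(0)\|_2\, T\,\|\Delta\tilde{\mathbf{A}}\|_2\, e^{\|\Delta\tilde{\mathbf{A}}\|_2 T}$, which is exactly the claimed $O(\|\Delta\tilde{\mathbf{A}}\|_2 \exp(\|\Delta\tilde{\mathbf{A}}\|_2 T))$ once $T$ and $\|\mathbf{Z}(0)\|_2$ are absorbed into the constant.

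For the random-walk coupling $\mathbf{C} = \mathbf{D}^{-1}\mathbf{A}$, I would reduce to the symmetric case via the similarity $\mathbf{D}^{-1}\mathbf{A} = \mathbf{D}^{-1/2}\tilde{\mathbf{A}}\mathbf{D}^{1/2}$, so that $e^{-(\mathbf{I}-\mathbf{D}^{-1}\mathbf{A})T} = \mathbf{D}^{-1/2} e^{-(\mathbf{I}-\tilde{\mathbf{A}})T}\mathbf{D}^{1/2}$ and the difference of exponentials is merely conjugated by $\mathbf{D}^{\pm 1/2}$; the resulting norm picks up a factor at most $\kappa(\mathbf{D}^{1/2})$, which depends only on the degree sequence and is independent of the topological shift, hence harmless for the asserted rate. (Alternatively one runs the same integral-representation argument directly with $\|\mathbf{D}^{-1}\mathbf{A}\|_2 = O(1)$, at the cost of a different constant.)

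The main obstacle is not the computation but obtaining the matrix-exponential perturbation inequality in the sharp form that keeps the prefactor linear in $\|\Delta\tilde{\mathbf{A}}\|_2$ and the exponent exactly $\|\Delta\tilde{\mathbf{A}}\|_2 T$ rather than $(1+\|\Delta\tilde{\mathbf{A}}\|_2)T$; this is precisely what the $e^{-T}$ cancellation buys, and it is the one place where the spectral bound $\|\tilde{\mathbf{A}}\|_2\le 1$ enters. It is worth remarking that the exponential dependence is genuine: when $\tilde{\mathbf{A}}'$ and $\tilde{\mathbf{A}}$ fail to commute the first-order term cannot in general be improved to a polynomial, which is exactly the contrast with the advective model in Theorem~\ref{thm-ours}.
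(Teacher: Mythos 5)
Your proposal is correct and follows essentially the same route as the paper: write out the closed-form solution $\mathbf Z(T;\mathbf A)=e^{-(\mathbf I-\mathbf C)T}\mathbf Z(0)$ and apply a perturbation bound for the matrix exponential, the only difference being that you derive the bound from the integral representation $e^{\mathbf M'}-e^{\mathbf M}=\int_0^1 e^{s\mathbf M'}(\mathbf M'-\mathbf M)e^{(1-s)\mathbf M}\,ds$ whereas the paper directly invokes the equivalent result (3.5) of \cite{van1977sensitivity}. Your separate treatment of the random-walk normalization is in fact more careful than the paper, which simply identifies $\mathbf C'-\mathbf C$ with $\Delta\tilde{\mathbf A}$ for both choices of $\mathbf C$.
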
 
The label prediction $\hat{\mathbf Y} = \phi_{dec}(\mathbf Z(T; \mathbf A))$
could be highly sensitive to the change of the graph topology. Pushing further, we have the following corollary on the generalization capability of local diffusion models under topological shifts. 
\begin{corollary}\label{coro-1}
    Under the same condition as in Theorem~\ref{thm-main}, for diffusion models Eqn.~\ref{eqn-grand} with the normalized graph adjacency as the coupling matrix, the model-dependent generalization error on testing data generated with $E_{te}\neq E_{tr}$ has an upper bound: $\mathcal D_{ood-model}(\Gamma_\theta, E_{tr}, E_{tr}) =O(\mathbb E_{\mathbf A \sim p(A|E_{tr}), \mathbf A' \sim p(A|E_{te})} [ \|\Delta \tilde{\mathbf A}\|_2 \exp{(\|\Delta \tilde{\mathbf A}\|_2 T)} ])$.
\end{corollary}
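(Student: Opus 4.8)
The plan is to derive Corollary~\ref{coro-1} by directly chaining Theorem~\ref{thm-main} with Proposition~\ref{prop-1}, so that essentially no new argument is needed beyond a bookkeeping step that moves a pointwise estimate inside an expectation. Theorem~\ref{thm-main} already isolates the model-dependent component of the out-of-distribution gap as
\[
\mathcal D_{ood-model}(\Gamma_\theta, E_{tr}, E_{te}) = O\!\left(\mathbb E_{\mathbf A \sim p(A|E_{tr}),\, \mathbf A' \sim p(A|E_{te})}\big[\, \|\mathbf Z(T; \mathbf A') - \mathbf Z(T; \mathbf A)\|_2 \,\big]\right),
\]
so the task reduces to bounding the integrand $\|\mathbf Z(T; \mathbf A') - \mathbf Z(T; \mathbf A)\|_2$ for the specific instantiation of $\Gamma_\theta$ as the local diffusion model of Eqn.~\ref{eqn-grand} with coupling matrix $\mathbf C = \tilde{\mathbf A}$ (or $\mathbf C = \mathbf D^{-1}\mathbf A$). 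This is exactly the statement of Proposition~\ref{prop-1}.

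First I would fix an arbitrary realization $(\mathbf A, \mathbf A')$ drawn from $p(A|E_{tr}) \times p(A|E_{te})$ and apply Proposition~\ref{prop-1} to obtain $\|\mathbf Z(T; \mathbf A') - \mathbf Z(T; \mathbf A)\|_2 \le c\, \|\Delta \tilde{\mathbf A}\|_2 \exp(\|\Delta \tilde{\mathbf A}\|_2 T)$, where $\Delta \tilde{\mathbf A} = \tilde{\mathbf A}' - \tilde{\mathbf A}$ and the constant $c$ depends only on the fixed stopping time $T$, the norm of the (topology-independent) initial condition $\mathbf Z(0) = \phi_{enc}(\mathbf X)$, and the spectral properties of $\mathbf C$. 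Taking expectations over both draws and using monotonicity of the expectation, the pointwise bound passes through, giving
\[
\mathcal D_{ood-model}(\Gamma_\theta, E_{tr}, E_{te}) = O\!\left(\mathbb E_{\mathbf A \sim p(A|E_{tr}),\, \mathbf A' \sim p(A|E_{te})}\big[\, \|\Delta \tilde{\mathbf A}\|_2 \exp(\|\Delta \tilde{\mathbf A}\|_2 T) \,\big]\right).
\]
Substituting this back into the three-term decomposition of Theorem~\ref{thm-main} (the terms $\mathcal D_{in}$ and $\mathcal D_{ood-label}$ being untouched) yields the full generalization-gap statement, completing the proof.

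The one point requiring care is the uniformity of the hidden constant in the $O(\cdot)$ of Proposition~\ref{prop-1} across the random draws of $(\mathbf A, \mathbf A')$: to legitimately pull the estimate inside the expectation one needs the multiplicative constant to be chosen independently of the sampled pair. The hard part is therefore not any inequality but verifying this uniformity --- which holds because $\mathbf C = \tilde{\mathbf A}$ has operator norm at most $1$ regardless of the realization, $T$ is a fixed hyper-parameter, and $\mathbf Z(0)$ does not depend on $\mathbf A$ at all, so $c$ can indeed be taken uniform. A secondary, purely bookkeeping point is that the Lipschitz continuity of $\phi_{dec}$ and $l$ assumed in Theorem~\ref{thm-main} is what converts control of $\|\mathbf Z(T; \mathbf A') - \mathbf Z(T; \mathbf A)\|_2$ into control of the loss; here we simply reuse that reduction rather than reprove it. Beyond these, I anticipate no obstacles, since Corollary~\ref{coro-1} is essentially a specialization of the two cited results.
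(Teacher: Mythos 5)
Your proposal is correct and matches the paper's own proof, which likewise obtains Corollary~\ref{coro-1} by inserting the pointwise bound of Proposition~\ref{prop-1} into the expectation defining $\mathcal D_{ood-model}$ from Theorem~\ref{thm-main}. Your extra remark on the uniformity of the hidden constant (via $\|\tilde{\mathbf A}\|_2 \le 1$ and fixed $T$) is a careful addition the paper leaves implicit, but it does not change the argument.
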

By definition in Sec.~\ref{sec:discuss-scm}, the graph adjacency is a realization of a random variable $A = h(U_u, U_v; W, E)$ dependent on a varying environment $E$. Topological shifts caused by different distributions of $\mathbf A$'s between training and testing environments may result in large $\mathcal D_{ood-model}$.
\footnote{The influence of topology variation is inherently associated with $h$. For example, if one considers $h$ as the stochastic block model~\citep{snijders1997estimation}, then the change of $E$ may lead to generated graph data with different edge probabilities. In the case of real-world data with intricate topological patterns, the functional forms of $h$ can be more complex, consequently inducing different types of topological shifts.}
This result together with Theorem~\ref{thm-main} suggests that local diffusion models may lead to undesirably poor generalization in cases where models are expected to be insensitive to the change of topologies. For example, for situations where the ground-truth labels do not dramatically change with topological shifts (i.e., $\mathcal D_{ood-label}$ is small), local diffusion models may induce large $\mathcal D_{ood-model}$ that prejudices generalization. The above conclusion can be extended to models with layer-wise feature transformations and non-linearity (see Appendix~\ref{appendix-proof-dis-trans} for illustration). While the upper bound result does not mean the exponentially growing error would definitely happen in practice, our empirical comparison in Fig.~\ref{fig:res-synthetic} shows that the generalization error of local diffusion models (\emph{Diff-Linear}, \emph{Diff-MultiLayer} and \emph{Diff-Time}) on test data grows super-linearly w.r.t. $\|\Delta \tilde{\mathbf A}\|_2$ across three types of topological shifts (see experimental details in Sec.~\ref{sec:exp-synthetic}).

\textbf{Non-Local Diffusion Model.} We next discuss the non-local diffusion model without the advection term, which as previously mentioned can be essentially treated as a generalization of local diffusion to a complete or fully-connected graph. The corresponding diffusion equation still exhibits the form of Eqn.~\ref{eqn-grand} but allows non-zero entries for arbitrary $(u, v)$'s in the coupling matrix to accommodate the all-pair information flows. Then we can easily derive a result that if $Y$ is conditionally independent from $A$ with given $\{U_u\}_{u\in \mathcal V}$ in the data generation hypothesis of Sec.~\ref{sec:discuss-scm}, the non-local diffusion model (i.e., Eqn.~\ref{eqn-grand} with the attention-based coupling matrix) leads to the generalization gap
\begin{equation}
    \mathcal R(\Gamma_\theta; E_{te}) - \mathcal R_{emp}(\Gamma_\theta; E_{tr}) \leq \mathcal D_{in}(\Gamma_\theta, E_{tr}, N_{tr}),
\end{equation}
which holds with the probability $1-\delta$. The assumption of conditional independence between $Y$ and $A$, however, can be violated in many situations where labels strongly correlate with observed graph structures.
Furthermore, the performance on testing data (i.e., what we care about) depends on both the model's expressiveness and generalization. The non-local diffusion alone, discarding any observed topology, has insufficient expressiveness for capturing the structural information. By contrast, the advective diffusion model proposed in Sec.~\ref{sec:model-background} that accommodates the observed structures can provably generalize under topological shifts without the conditional independence assumption between $Y$ and $A$. This makes the advective diffusion model more powerful in real cases. Furthermore, as empirically validated in Fig.~\ref{fig:res-synthetic}, the non-local diffusion model (\emph{Diff-NonLocal}) indeed yields comparably stable yet obviously inferior performance to our models (\modelinv and \modelser).

\section{Model Implementation}\label{sec:implement}

The remaining question concerning model implementation boils down to how to solve the advective diffusion equation Eqn.~\ref{eqn-advdiff-ours}. One straightforward solution is to harness the scheme adopted by \cite{chen2018neuralode} for back-propagation through PDE dynamics. However, since it is known that the equation has a closed-form solution $e^{-(\mathbf I - \mathbf C - \beta \mathbf V)t}$, we resort to a implementation-wise simpler method by computing the solution instead of numerically solving the equation. Nevertheless, direct computation of the matrix exponential through eigendecomposition is computationally intractable for large matrices. As an alternative, we leverage numerical techniques based on series expansion that produces two model versions. Due to space limit, we present the main ideas in this subsection and defer details on model implementation to Appendix~\ref{appendix-implement-arch}.

\textbf{\modelinv} uses a numerical method based on the extension of Pad\'{e}-Chebyshev theory to rational fractions~\citep{golub1989matrix,gallopoulos1992efficient}, which has shown empirical success in 3D shape analysis~\citep{patane2014laplacian}. The matrix exponential is approximated by solving multiple linear systems (see more details and derivations in Appendix~\ref{appendix-solving}) and we generalize it as a multi-head network where each head propagates in parallel:
\begin{equation}\label{eqn-exp-comp-1}
\begin{split}
\mathbf L_h &= (1 + \theta)\mathbf I - \mathbf C_h - \beta \mathbf V, ~~h = 1, \cdots, H, \\
    \mathbf Z(T) &\approx \sum\nolimits_{h=1}^H  \phi_{FC}^{(h)}( \mbox{\texttt{linsolver}}(\mathbf L_h, \mathbf Z(0)) ).
\end{split}
\end{equation}
The \texttt{linsolver} computes the matrix inverse $\mathbf Z_h = (\mathbf L_h)^{-1} \mathbf Z(0)$ and can be efficiently implemented via \texttt{torch.linalg.solve()} that enables automated differentiation. Each head contains propagation with the pre-computed attention $\mathbf C_h$ and node-wise transformation $\phi_{FC}^{(h)}$. 

\textbf{\modelser} resorts to approximation by finite geometric series (see Appendix~\ref{appendix-solving} for derivations):
\begin{equation}\label{eqn-exp-comp-2}
\begin{split}
\mathbf P_h & = \mathbf C_h + \beta \tilde{\mathbf A}, ~~h = 1, \cdots, H, \\
    \mathbf Z(T) & \approx \sum\nolimits_{h=1}^H \phi_{FC}^{(h)}( [\mathbf Z(0), \mathbf P_h \mathbf Z(0), \cdots, (\mathbf P_h)^{K} \mathbf Z(0)] ).
\end{split}
\end{equation}
This model aggregates $K$-order propagated results with the propagation matrix $\mathbf P_h$ in each head. One advantage of this model version lies in its good scalability with linear complexity w.r.t. the number of nodes in the feed-forward computation (see detailed illustration in Appendix~\ref{appendix-implement-arch-linear}).


\section{Experiments}\label{sec:exp}

We apply our model to a wide variety of downstream tasks of disparate scales and granularities that involve topological shifts led by distinct factors. Due to the diversity of datasets and tasks, the competing models that are applicable to specific cases can vary case by case, so the goal of our experiments is to showcase the wide applicability and superiority of \model against commonly used GNNs and graph Transformers as well as bespoke methods tailored for specific tasks. In the following, we delve into each case separately with the overview of experimental setup and discussions. More detailed dataset information is provided in Appendix~\ref{appendix-exp-dataset}. Details on baselines and hyper-parameters are deferred to Appendix~\ref{appendix-exp-baseline} and \ref{appendix-exp-implement}, respectively. 


\subsection{Synthetic Datasets}\label{sec:exp-synthetic}

To validate our model and theoretical analysis, we create synthetic datasets simulating the data generation hypothesis in Sec.~\ref{sec:discuss-scm}. 
We instantiate $h$ as a stochastic block model which generates edges $A_{uv}$ according to block numbers ($b$), intra-block edge probability ($p_1$) and inter-block edge probability ($p_2$). Then we study three types of topological distribution shifts: {\bf homophily shift} (changing $p_2$ with fixed $p_1$);  {\bf density shift} (changing $p_1$ and $p_2$); and {\bf block shift} (varying $b$). The predictive task is node regression. More details on data generation are presented in Appendix~\ref{appendix-exp-dataset-synthetic}.

Fig.~\ref{fig:res-synthetic} plots the testing error (i.e., Mean Square Error) w.r.t. differences in graph topologies $\|\Delta \mathbf A\|_2$ (i.e., the gap between training and testing graphs) in three cases. We compare our model (\modelinv and \modelser) with other diffusion-based models as competitors. The latter includes \emph{Diff-Linear} (graph diffusion with constant $\mathbf C$), \emph{Diff-MultiLayer} (the extension of \emph{Diff-Linear} with intermediate feature transformations), \emph{Diff-Time} (graph diffusion with time-dependent $\mathbf C(\mathbf Z(t))$) and \emph{Diff-NonLocal} (non-local diffusion with the global attention-based $\mathbf C(\mathbf Z(t))$). The results show that three local graph diffusion models exhibit clear performance degradation, i.e., the regression error grows super-linearly w.r.t. topological shifts, while our two models yield consistently low error across environments. In contrast, the non-local diffusion model produces comparably stable performance yet inferior to our models due to its ignorance of the useful information in input graphs. These empirical observations are consistent with our theoretical results presented in Sec.~\ref{sec:model-analysis} and \ref{sec:model-related}.

\begin{table}[t!]
\centering
\caption{Results on \texttt{Arxiv} and \texttt{Twitch}, where we use time and spatial contexts for data splits, respectively. We report the Accuracy ($\uparrow$) for three testing sets of \texttt{Arxiv} and average ROC-AUC ($\uparrow$) for all testing graphs of \texttt{Twitch} (results for each case are reported in Appendix~\ref{appendix-res-main}). Top performing methods are marked as {\color{red!80}first}/{\color{blue!80}second}/{\color{purple!80}third}. OOM indicates out-of-memory error.}
\label{tbl-res-nc}
    \resizebox{0.48\textwidth}{!}{
\begin{tabular}{l|ccc|c}
\toprule
\multicolumn{1}{l|}{}  & \textbf{Arxiv (2018)} & \textbf{Arxiv (2019)} & \textbf{Arxiv (2020)} & \textbf{Twitch 
(avg)
} \\
\midrule
\textbf{MLP} & 49.91 $\pm$ 0.59 & 47.30 $\pm$ 0.63 & 46.78 $\pm$ 0.98 & 61.12 $\pm$ 0.16 \\
\textbf{GCN}          & 50.14 $\pm$ 0.46      & 48.06 $\pm$ 1.13      & 46.46 $\pm$ 0.85      & 59.76 $\pm$ 0.34                   \\
\textbf{GAT}          & \color{purple!80}\textbf{51.60 $\pm$ 0.43}      & 48.60 $\pm$ 0.28      & 46.50 $\pm$ 0.21      & 59.14 $\pm$ 0.72                   \\
\textbf{SGC}          & 51.40 $\pm$ 0.10      & \color{purple!80}\textbf{49.15 $\pm$ 0.16}      & 46.94 $\pm$ 0.29      & 60.86 $\pm$ 0.13                   \\
\textbf{GDC}          & 51.53 $\pm$ 0.42      & 49.02 $\pm$ 0.51      & \color{purple!80}\textbf{47.33 $\pm$ 0.60}      & 61.36 $\pm$ 0.10                   \\
\textbf{GRAND} & \color{blue!80}\textbf{52.45 $\pm$ 0.27}      & \color{blue!80}\textbf{50.18 $\pm$ 0.18}      & \color{blue!80}\textbf{48.01 $\pm$ 0.24}      & 61.65 $\pm$ 0.23                   \\
\textbf{A-DGNs} & 50.91 $\pm$ 0.41 & 47.54 $\pm$ 0.61 & 45.79 $\pm$ 0.39 & 60.11 $\pm$ 0.09 \\
\textbf{CDE} & 50.54 $\pm$ 0.21 & 47.31 $\pm$ 0.52 & 45.32 $\pm$ 0.26 & 60.69 $\pm$ 0.10 \\
\textbf{GraphTrans}   & OOM                   & OOM                   & OOM                   & 61.65 $\pm$ 0.23                   \\
\textbf{GraphGPS}     & 51.11 $\pm$ 0.19      & 48.91 $\pm$ 0.34      & 46.46 $\pm$ 0.95      & \color{blue!80}\textbf{62.13 $\pm$ 0.34}                   \\
\textbf{DIFFormer}    & 50.45 $\pm$ 0.94      & 47.37 $\pm$ 1.58      & 44.30 $\pm$ 2.02      & \color{purple!80}\textbf{62.11 $\pm$ 0.11}                   \\
\hline
\textbf{\modelser}  & \color{red!80}\textbf{53.41 $\pm$ 0.48}      & \color{red!80}\textbf{51.53 $\pm$ 0.60}      & \color{red!80}\textbf{49.64 $\pm$ 0.54}      & \color{red!80}\textbf{62.51 $\pm$ 0.07}      \\
\bottomrule
\end{tabular}}
\end{table}

\begin{table*}[t!]
\centering
\caption{Results of RMSE ($\downarrow$) for node regression and edge regression on dynamic networks of protein-protein interactions.}
\label{tbl-res-protein}
    \resizebox{\textwidth}{!}{
\begin{tabular}{l|ccc|ccc}
\toprule
\multicolumn{1}{l|}{}  & \multicolumn{3}{c|}{\textbf{Node Regression}} & \multicolumn{3}{c}{\textbf{Edge Regression}}  \\
& \textbf{Valid} & \textbf{Test Average} & \textbf{Test Worst} & \textbf{Valid} & \textbf{Test Average} & \textbf{Test Worst} \\
\midrule
\textbf{MLP} & 0.768 $\pm$ 0.011 & 0.672 $\pm$ 0.014 & 0.768 $\pm$ 0.014 & 0.150 $\pm$ 0.004 & 0.192 $\pm$ 0.003 & 0.204 $\pm$ 0.003 \\
\textbf{GCN} & 1.791 $\pm$ 0.023 & 1.308 $\pm$ 0.013 & 1.797 $\pm$ 0.007 & 0.185 $\pm$ 0.003 & 0.196 $\pm$ 0.001 & 0.213 $\pm$ 0.001 \\
\textbf{GAT} & 1.255 $\pm$ 0.022 & 1.057 $\pm$ 0.030 & 1.708 $\pm$ 0.067 & 0.210 $\pm$ 0.010 & 0.204 $\pm$ 0.006 & 0.216 $\pm$ 0.010 \\
\textbf{SGC} & 1.622 $\pm$ 0.004 & 1.154 $\pm$ 0.006 & 1.616 $\pm$ 0.002 & 0.193 $\pm$ 0.000 & 0.191 $\pm$ 0.001 & 0.209 $\pm$ 0.001 \\
\textbf{GraphTrans} & 3.798 $\pm$ 1.146 & 3.203 $\pm$ 0.889 & 3.795 $\pm$ 1.123 & 0.189 $\pm$ 0.005 & 0.189 $\pm$ 0.008 & 0.202 $\pm$ 0.003 \\
\textbf{GraphGPS} & 0.713 $\pm$ 0.050 & 0.671 $\pm$ 0.040 & 0.803 $\pm$ 0.081 & 0.168 $\pm$ 0.004 & \color{purple!80}\textbf{0.182 $\pm$ 0.007} & 0.216 $\pm$ 0.019 \\
\textbf{DIFFormer} & 0.672 $\pm$ 0.046 & \color{blue!80}\textbf{0.637 $\pm$ 0.034} & \color{purple!80}\textbf{0.710 $\pm$ 0.028} & 0.171 $\pm$ 0.007 & 0.183 $\pm$ 0.005 & \color{purple!80}\textbf{0.197 $\pm$ 0.003} \\
\hline
\textbf{\modelinv} & 0.681 $\pm$ 0.010 & \color{purple!80}\textbf{0.643 $\pm$ 0.019} & \color{blue!80}\textbf{0.679 $\pm$ 0.021} & 0.159 $\pm$ 0.002 & \color{red!80}\textbf{0.166 $\pm$ 0.006} & \color{red!80}\textbf{0.184 $\pm$ 0.011} \\
\textbf{\modelser}  & 0.547 $\pm$ 0.040 & \color{red!80}\textbf{0.574 $\pm$ 0.028} & \color{red!80}\textbf{0.644 $\pm$ 0.040} & 0.156 $\pm$ 0.006   & \color{blue!80}\textbf{0.167 $\pm$ 0.004} & \color{blue!80}\textbf{0.188 $\pm$ 0.010} \\
\bottomrule
\end{tabular}
}
\end{table*}

\begin{figure*}[t!]
    \centering
    \includegraphics[width=\linewidth]{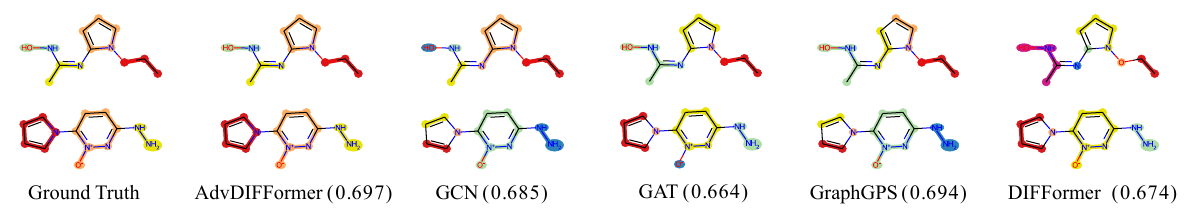}
    \caption{Testing cases for molecular mapping operators generated by different models with averaged testing Accuracy ($\uparrow$) reported. The task is to generate subgraph-level partitions (marked by different colors) resembling the ground-truth.}
    \label{fig:ham-demo}
\end{figure*}

\subsection{Real-World Datasets}\label{sec:exp-real}

We next evaluate \model on real-world datasets with more complex topological shifts concerning non-Euclidean data in a diverse set of applications. 

\textbf{Information Networks}.
We first consider citation networks \texttt{Arxiv}~\citep{ogb-nips20} and social networks \texttt{Twitch}~\citep{twitch-data} with graph sizes ranging from 2K to 0.2M, where we use the scalable version \modelser. To introduce topological shifts, we partition the data according to publication years and geographic information for \texttt{Arxiv} and \texttt{Twitch}, respectively. The predictive task is node classification, and we follow the common practice comparing Accuracy (resp. ROC-AUC) for \texttt{Arxiv} (resp. \texttt{Twitch}).
We compare with three types of state-of-the-art baselines: (i) {\bf classical GNNs} (\emph{GCN}~\citep{GCN-vallina}, \emph{GAT}~\citep{GAT} and \emph{SGC}~\citep{SGC-icml19}); (ii) {\bf diffusion-based GNNs} (\emph{GDC}~\citep{klicpera2019diffusion}, \emph{GRAND}~\citep{grand}, \emph{A-DGNs}~\citep{A-DGNs} and \emph{CDE}~\citep{CDE}), and (iii) {\bf graph Transformers} (\emph{GraphTrans}~\citep{graphtrans-neurips21}, \emph{GraphGPS}~\citep{graphgps}, and the diffusion-based \emph{DIFFormer}~\citep{difformer}). Appendix~\ref{appendix-exp-baseline} presents detailed descriptions for these models. 
Table~\ref{tbl-res-nc} reports the results, showing that our model offers significantly superior generalization for node classification. 

\textbf{Protein Interactions}.
We then test on protein-protein interactions~\citep{fu2022dppin}. Each node denotes a protein with a time-aware gene expression value and the edges indicate co-expressed protein pairs at each time. The dataset consists of 12 dynamic networks each of which is obtained by one protein identification method and records the metabolic cycles of yeast cells. The networks have distinct topological features (e.g., distribution of cliques)~\cite{fu2022dppin}, and we use 6/1/5 networks for train/valid/test. 
To test the generalization of the model in different scenarios, we consider two predictive tasks: i) node regreesion for gene expression values (measured by RMSE), and 2) edge regression for predicting the co-expression correlation coefficients (measured by RMSE). Table~\ref{tbl-res-protein} reports the averaged scores and worst-case scores across all testing graphs. The results show that \modelser and \modelinv are ranked in the top three (resp. two) models in terms of the averaged (resp. worst-case) testing performance. Moreover, \modelser performs better in node regression tasks, while \modelinv exhibits (slightly) better competitiveness for edge regression. The possible reason might be that \modelinv can better exploit high-order structural information as the matrix inverse can be treated as \modelser with $K\rightarrow \infty$.

\textbf{Molecular Mapping Operator Generation}.
We next consider the generation of molecular coarse-grained mapping operators, an important step for molecular dynamics simulation, aiming to find a representation of how atoms are grouped in a molecule~\citep{li2020graph}. The task is a graph segmentation problem which can be modeled as predicting edges that indicate where to partition the graph. We use the relative molecular mass to split the data and test how the model extrapolates to larger molecules. Fig.~\ref{fig:ham-demo} compares the testing cases (with more cases shown in Appendix~\ref{appendix-res-main}) generated by different models, which shows the more accurate estimation of our model (we use \modelser for experiments) that demonstrates desired generalization.

\begin{figure*}[t!]
    \centering
    \includegraphics[width=\linewidth]{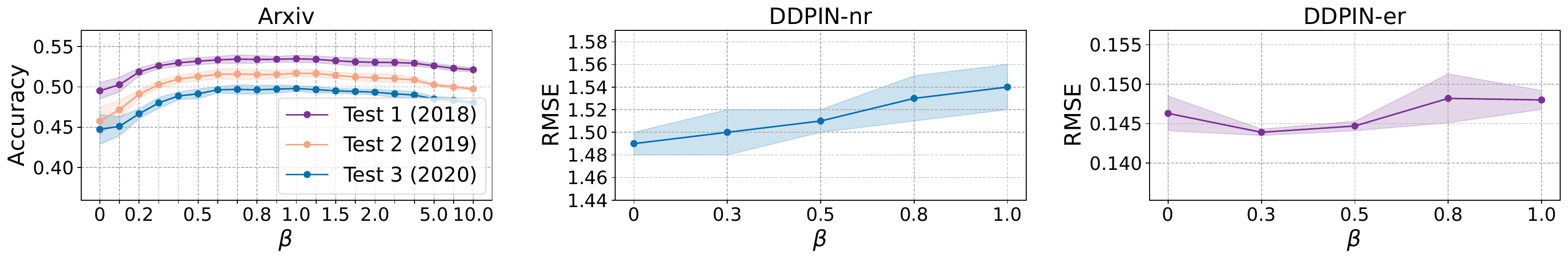}
    \caption{Analysis of $\beta$ on \texttt{Arxiv} and node regression (nr) and edge regression (er) tasks on \texttt{DPPIN}.}
    \label{fig:hyper-beta}
\end{figure*}

\textbf{Hyper-Parameter Analysis.} The hyper-parameter $\beta$ controls the importance weight for the advection term. Fig.~\ref{fig:hyper-beta} shows the model performance of \modelser on \texttt{Arxiv} and \texttt{DPPIN} with different $\beta$'s. We found that the optimal settings for $\beta$ can be different across datasets and tasks. For node classification on \texttt{Arxiv}, the model gives the best performance with $\beta\in [0.7, 1.0]$. The performance degrades when $\beta$ is too small ($<$0.5) or too large ($>$2.0). The reason could be that the graph structural information is useful for the predictive task on \texttt{Arxiv} yet too much emphasis on the graph structure can lead to undesired generalization. Differently, for \texttt{DPPIN}, we found that using smaller $\beta$ can bring up more satisfactory performance across node regression and edge regression tasks. In particular, setting $\beta=0$, in which case the advection term is completely dropped, can yield optimal performance for the node regression task. This is possibly because the graph structure is uninformative and pure global attention can learn generalizable topological patterns from latent interactions. To sum up, in practice, the model enables much flexibility for adjusting the weight on the advection effect (the importance of observed structural information) to accommodate the diversity of graph-structured data. More hyper-parameter analysis (w.r.t. $\theta$ and $K$) is deferred to Appendix~\ref{appendix-res-ablation}. 

\textbf{Ablation Studies.} We defer ablation studies on the diffusion and advection terms of our model to Appendix~\ref{appendix-res-ablation}.

\section{Conclusions}

This paper studies generalization with topological shifts, a largely open question in machine learning, and the insights in this work open new possibilities of leveraging diffusion PDEs as principled guidance for navigating generalizable neural network models. As exemplified in this work, our proposed Advective Diffusion Transformer, inspired by advective diffusion equations, has provable potentials for generalization and shows superior performance in various downstream tasks across different scenarios. 

\section*{Impact Statement}

This paper presents work whose goal is to advance the current understandings for the generalization of neural network models. In general sense, improving the generalizability of the model is important for many aspects associated with AI's societal responsibilities, such as addressing the observational bias in training data and promoting the fairness of the outcome on the test set. Specifically speaking, our paper explores the generalization capability of neural network models operated on varying graph topologies, which has implications on several significant applications such as social network analysis, drug discovery and healthcare. The results and methodology presented in this work can shed lights on enhancing the trustworthiness and reliability of machine learning models in the open world regime.

\section*{Acknowledgement}

QW is supported in part by funding from the Eric and Wendy Schmidt Center at the Broad Institute of MIT and Harvard. MB is partially supported by the EPSRC Turing AI World-Leading Research Fellowship No. EP/X040062/1 and EPSRC AI Hub No. EP/Y028872/1.

\bibliography{example_paper}
\bibliographystyle{icml2025}

\newpage
\appendix
\onecolumn

\section{Connection between Diffusion Equations and Message Passing}\label{appendix-discrete}

In this section, we provide a systematically introduction on the fundamental connections between graph diffusion equations and neural message passing, as supplementary technical background for our analysis and methodology presented in the main text. Consider graph diffusion equations of the generic form
\begin{equation}\label{eqn-grand-v2}
    \frac{\partial \mathbf Z(t)}{\partial t} = (\mathbf C(\mathbf Z(t); \mathbf A) - \mathbf I)\mathbf Z(t), ~~ 0 \leq t \leq T, ~~~\mbox{with initial conditions}~ \mathbf Z(0) = \phi_{enc}(\mathbf X).
\end{equation}
As demonstrated by existing works, e.g.,~\cite{grand}, using finite-difference numerical schemes for solving Eqn.~\ref{eqn-grand-v2} would induce the message passing neural networks of various forms. The latter is recognized as the common paradigm in modern graph neural networks and Transformers whose layer-wise updating aggregates the embeddings of other nodes to compute the embeddings for the next layer.

\subsection{Graph Neural Networks as Local Diffusion}

Consider the explicit Euler's scheme as the commonly used finite-difference method for approximately solving the differential equations, and Eqn.~\ref{eqn-grand-v2} will induce the discrete iterations with step size $\tau$:
\begin{equation}
    \frac{\mathbf Z^{(k+1)} - \mathbf Z^{(k)}}{\tau} \approx (\mathbf C(\mathbf Z^{(k)}; \mathbf A) - \mathbf I) \mathbf Z^{(k)}.
\end{equation}
With some re-arranging we have
\begin{equation}\label{eqn-explicit-iter}
    \mathbf Z^{(k+1)} = (1 - \tau) \mathbf Z^{(k)} + \tau \mathbf C(\mathbf Z^{(k)}; \mathbf A) \mathbf Z^{(k)},
\end{equation}
with the initial states $\mathbf Z^{(0)} = \phi_{enc}(\mathbf X)$.
The above updating equation gives one-layer update through residual connection and propagation with $\mathbf C(\mathbf Z^{(k)}; \mathbf A)$. There are some well-known graph neural network architectures that can be derived with different instantiations of the coupling matrix.

\textbf{Simplifying Graph Convolution (SGC).} If one considers $\mathbf C(\mathbf Z^{(k)}; \mathbf A) = \tilde{\mathbf A} = \mathbf D^{-1/2} \mathbf A \mathbf D^{-1/2}$, then we will get the one-layer updating rule:
\begin{equation}
    \mathbf Z^{(k+1)} = (1 - \tau) \mathbf Z^{(k)} + \tau \mathbf D^{-1/2} \mathbf A \mathbf D^{-1/2} \mathbf Z^{(k)}.
\end{equation}
This can be seen as one-layer propagation of SGC~\citep{SGC-icml19} with residual connection, and when $\tau=1$ it becomes exactly the SGC layer.
Since SGC model does not involve feature transformation layers and non-linearity throughout the message passing, one often uses a pre-computed propagation matrix for one-step convolution that is much faster than the multi-layer convolution:
\begin{equation}
    \mathbf Z^{(K)} = \mathbf P^K \mathbf Z^{(0)}, ~~~ \mathbf P = (1 - \tau)\mathbf I + \tau \mathbf D^{-1/2} \mathbf A \mathbf D^{-1/2}.
\end{equation}

\textbf{Graph Convolution Networks (GCN).} The GCN network inserts feature transformation layers in-between the propagation layers. This can be achieved by considering $K$ stacked piece-wise diffusion equations, where the $k$-th dynamics is given by the differential equation with time boundaries:
\begin{equation}\label{eqn-grand-w-v2}
    \frac{\partial \mathbf Z(t; k)}{\partial t} = (\mathbf C - \mathbf I)\mathbf Z(t; k), ~~ t \in [t_{k-1}, t_{k}], ~~ \mbox{with initial conditions} ~~ \mathbf Z(t_{k-1}; k) = \phi_{int}^{(k)}(\mathbf Z(t_{k-1}; k-1)),
\end{equation}
where $\phi_{int}^{(k)}$ denotes the node-wise feature transformation of the $k$-th layer. Assume $\mathbf C = \mathbf D^{-1/2} \mathbf A \mathbf D^{-1/2}$. Then consider one-step feed-forward of the explicit Euler scheme for Eqn.~\ref{eqn-grand-w-v2}, and one can obtain the updating rule at the $k$-th layer:
\begin{equation}
    \mathbf Z^{(k+1)} =  \phi_{int}^{(k+1)} \left ( (1 - \tau) \mathbf Z^{(k)} + \tau \mathbf D^{-1/2} \mathbf A \mathbf D^{-1/2} \mathbf Z^{(k)} \right ).
\end{equation}
This corresponds to one GCN layer~\citep{GCN-vallina} if one considers $\phi_{int}^{(k+1)}$ as a fully-connected neural layer with ReLU activation and simply sets $\tau = 1$.

\textbf{High-Order Propagation.} Besides the explicit numerical scheme, one can also utilize the implicit scheme and multi-step schemes (e.g., Runge-Kutta) for solving the diffusion equation, and the induced updating form will involve high-order information~\citep{grand}.

\subsection{Graph Transformers as Non-Local Diffusion}

The original architectures of Transformers~\citep{transformer} involve self-attention layers as the key module, where the attention measures the pairwise influence between arbitrary token pairs in the input. There are recent works, e.g., ~\cite{graphtransformer-2020,graphformer-neurips21,graphtrans-neurips21,graphgps,wunodeformer} transferring the Transformer architectures originally designed for sequence inputs into graph-structured data,
and the attention is computed for arbitrary node pairs in the graph, which can be seen as a counterpart of non-local diffusion~\citep{difformer,difformer2}. In specific, the coupling matrix allows non-zero entries for arbitrary location pairs and can be instantiated as a global attention network. Then using the explicit Euler's scheme as Eqn.~\ref{eqn-explicit-iter} we can obtain the self-attention propagation layer of common Transformers:
\begin{equation}
    \mathbf Z^{(k+1)} = (1 - \tau) \mathbf Z^{(k)} + \tau \mathbf C^{(k)} \mathbf Z^{(k)}, ~~~ c_{uv}^{(k)} = \frac{\eta(\mathbf z_u^{(k)}, \mathbf z_v^{(k)})}{\sum_{w\in \mathcal V} \eta(\mathbf z_u^{(k)}, \mathbf z_w^{(k)})}.
\end{equation}
For obtaining the fully-connected layers and non-linear activations adopted in Transformers, one can inherit the spirit of GCN and extend the diffusion model to $K$ piece-wise equations as Eqn.~\ref{eqn-grand-w-v2}. 

\section{Proofs for Technical Results}\label{appendix-proof}

\subsection{Proof for Theorem~\ref{thm-main}}

According to the data generation hypothesis in Fig.~\ref{fig:scm}, for given node latents $U_u$'s, we can decompose the joint distribution into (we omit all conditions on $U$ for brevity)
\begin{equation}
    p(X, A, Y|E) = p(X|E) p(A|E) p(Y|A, E).
\end{equation}
Also, by definition in Sec.~\ref{sec:discuss-scm} we have
\begin{equation}
    p(X|E=E_{tr}) = p(X|E=E_{te}),
\end{equation}
\begin{equation}
    p(Y|A, E=E_{tr}) = p(Y|A, E=E_{te}).
\end{equation}
Therefore we have $p(X, A, Y|E) = p(X) p(A|E) p(Y|A)$. We next consider the gap between $\mathcal R(\Gamma_\theta; E_{tr})$ and $\mathcal R(\Gamma_\theta; E_{te})$:
\begin{small}
    \begin{equation}\label{eqn-proof-thm1-1}
    \begin{split}
        & ~ \left |\mathcal R(\Gamma_\theta; E_{te}) - \mathcal R(\Gamma_\theta; E_{tr}) \right | \\
        = & ~ \left |\mathbb E_{(\mathbf X', \mathbf A', \mathbf Y')\sim p(X, A, Y|E=E_{te})} [l(\Gamma_\theta(\mathbf X', \mathbf A'), \mathbf Y')] - \mathbb E_{(\mathbf X, \mathbf A, \mathbf Y)\sim p(X, A, Y|E=E_{tr})} [l(\Gamma_\theta(\mathbf X, \mathbf A), \mathbf Y)] \right | \\
        = & ~ \left | \mathbb E_{\mathbf X' \sim p(X), \mathbf A' \sim p(A|E=E_{te}), \mathbf Y'\sim p(Y|A=\mathbf A'))} [l(\Gamma_\theta(\mathbf X', \mathbf A'), \mathbf Y')] \right. \\
        & - \left. \mathbb E_{\mathbf X \sim p(X), \mathbf A \sim p(A|E=E_{tr}), \mathbf Y\sim p(Y|A=\mathbf A))} [l(\Gamma_\theta(\mathbf X, \mathbf A), \mathbf Y)] \right | \\
        \leq & ~ \left | \mathbb E_{\mathbf X' \sim p(X), \mathbf A' \sim p(A|E=E_{te}), \mathbf Y'\sim p(Y|A=\mathbf A'))} [l(\Gamma_\theta(\mathbf X', \mathbf A'), \mathbf Y')] \right. \\
        & - \left. \mathbb E_{\mathbf X \sim p(X), \mathbf A \sim p(A|E=E_{tr}), \mathbf A' \sim p(A|E=E_{te}), \mathbf Y'\sim p(Y|A=\mathbf A'))} [l(\Gamma_\theta(\mathbf X, \mathbf A), \mathbf Y')] \right | \\
        & + \left | \mathbb E_{\mathbf X \sim p(X), \mathbf A \sim p(A|E=E_{tr}), \mathbf A' \sim p(A|E=E_{te}), \mathbf Y'\sim p(Y|A=\mathbf A'))} [l(\Gamma_\theta(\mathbf X, \mathbf A), \mathbf Y')] \right . \\
        & - \left. \mathbb E_{\mathbf X \sim p(X), \mathbf A \sim p(A|E=E_{tr}), \mathbf Y\sim p(Y|A=\mathbf A))} [l(\Gamma_\theta(\mathbf X, \mathbf A), \mathbf Y)] \right | \\
        = & ~ \left | \mathbb E_{\mathbf X \sim p(X), \mathbf A \sim p(A|E=E_{tr}), \mathbf A' \sim p(A|E=E_{te}), \mathbf Y'\sim p(Y|A=\mathbf A'))} [l(\Gamma_\theta(\mathbf X, \mathbf A'), \mathbf Y') -l(\Gamma_\theta(\mathbf X, \mathbf A), \mathbf Y')] \right | \\
        & + \left | \mathbb E_{\mathbf X \sim p(X), \mathbf A \sim p(A|E=E_{tr}), \mathbf Y\sim p(Y|A=\mathbf A), \mathbf A' \sim p(A|E=E_{te}), \mathbf Y'\sim p(Y|A=\mathbf A'))} [l(\Gamma_\theta(\mathbf X, \mathbf A), \mathbf Y') - l(\Gamma_\theta(\mathbf X, \mathbf A), \mathbf Y)] \right | \\
        \leq & ~ \mathbb E_{\mathbf X \sim p(X), \mathbf A \sim p(A|E=E_{tr}), \mathbf A' \sim p(A|E=E_{te}), \mathbf Y'\sim p(Y|A=\mathbf A'))} \left [ | l(\Gamma_\theta(\mathbf X, \mathbf A'), \mathbf Y') -l(\Gamma_\theta(\mathbf X, \mathbf A), \mathbf Y')| \right ]  \\
        & + \mathbb E_{\mathbf X \sim p(X), \mathbf A \sim p(A|E=E_{tr}), \mathbf Y\sim p(Y|A=\mathbf A), \mathbf A' \sim p(A|E=E_{te}), \mathbf Y'\sim p(Y|A=\mathbf A'))} \left [ | l(\Gamma_\theta(\mathbf X, \mathbf A), \mathbf Y') - l(\Gamma_\theta(\mathbf X, \mathbf A), \mathbf Y)| \right ]. 
        \end{split}
    \end{equation}
\end{small}

Moreover, due to the Lipschitz continuity of $l$ and $\phi_{dec}$, we have
\begin{equation}\label{eqn-proof-thm1-2}
    \left | l(\Gamma_\theta(\mathbf X, \mathbf A'), \mathbf Y') - l(\Gamma_\theta(\mathbf X, \mathbf A), \mathbf Y') \right | \leq L_1 \cdot \left \|\mathbf Z(T; \mathbf A') - \mathbf Z(T; \mathbf A)\right \|_2,
\end{equation}
\begin{equation}\label{eqn-proof-thm1-3}
    \left | l(\Gamma_\theta(\mathbf X, \mathbf A), \mathbf Y') - l(\Gamma_\theta(\mathbf X, \mathbf A), \mathbf Y) \right | \leq L_2 \cdot \left \|\mathbf Y' - \mathbf Y \right \|_2,
\end{equation}
where $L_1$ and $L_2$ denote the Lipschitz constants. Combing Eqn.~\ref{eqn-proof-thm1-2} and Eqn.~\ref{eqn-proof-thm1-3} with Eqn.~\ref{eqn-proof-thm1-1}, we have
\begin{equation}\label{eqn-proof-thm1-4}
\begin{split}
    \left | \mathcal R(\Gamma_\theta; E_{te}) - \mathcal R(\Gamma_\theta; E_{tr}) \right | & \leq L_1 \cdot \mathbb E_{\mathbf A \sim p(A|E_{tr}), \mathbf A' \sim p(A|E_{te})} \left [ \| \mathbf Z(T; \mathbf A') - \mathbf Z(T; \mathbf A) \|_2 \right ] \\
    & + L_2 \cdot \mathbb E_{(\mathbf A, \mathbf Y) \sim p(A, Y|E_{tr}), (\mathbf A', \mathbf Y') \sim p(A, Y|E_{te})} \left [ \| \mathbf Y' - \mathbf Y \|_2 \right ].
\end{split}
\end{equation}
The conclusion for the main theorem can be obtained via combining Eqn.~\ref{eqn-proof-thm1-4} and Eqn.~\ref{eqn-indis-error} using the triangle inequality.

\subsection{Proof for Theorem~\ref{thm-ours}}

For the advective diffusion equation with the coupling matrix $\mathbf C$ pre-computed by attention network $\eta(\mathbf z_u(0), \mathbf z_v(0))$ and fixed velocity $\mathbf V = \mathbf D^{-1/2}\mathbf A\mathbf D^{-1/2}$, we have its closed-form solution
\begin{equation}
    \mathbf Z(t) = e^{ -(\mathbf I - \mathbf C - \beta \mathbf V) t } \mathbf Z(0), ~~t \geq 0.
\end{equation}

To prove the perturbation bound w.r.t. the change of graph structures, we introduce the following lemma.

\begin{lemma}
    Let $\mathbf X, \mathbf E \in \mathbb{R}^{n \times n}$, and let $||\cdot||$ be a submultiplicative matrix norm. Suppose there exist constants $M \geq 1$, $\omega \geq 0$ such that for all $\mathbf Y \in \mathbb{R}^{n \times n}$, $||e^{\mathbf Y}|| \leq M e^{\omega \|\mathbf Y\|}$. Then the following perturbation bound holds:
\begin{equation}
    \|e^{\mathbf X+\mathbf E} - e^{\mathbf X}\| \leq \|\mathbf E\| \cdot M^2 \cdot e^{\omega (\|\mathbf X\| + \|\mathbf E\|)}.
\end{equation}
\end{lemma}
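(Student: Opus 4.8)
\emph{Proof plan.} The plan is to reduce the bound to the standard integral representation of a difference of matrix exponentials and then estimate the integrand using the hypothesized exponential growth of $\|e^{\mathbf Y}\|$. First I would introduce the matrix-valued path $F(s) = e^{s(\mathbf X + \mathbf E)}\,e^{(1-s)\mathbf X}$ for $s\in[0,1]$. Since $s\mapsto e^{sA}$ is differentiable with $\frac{d}{ds}e^{sA} = A e^{sA} = e^{sA}A$, the product rule gives
\[
 F'(s) = e^{s(\mathbf X+\mathbf E)}(\mathbf X+\mathbf E)\,e^{(1-s)\mathbf X} - e^{s(\mathbf X+\mathbf E)}\mathbf X\,e^{(1-s)\mathbf X} = e^{s(\mathbf X+\mathbf E)}\,\mathbf E\,e^{(1-s)\mathbf X},
\]
where I used that $\mathbf X+\mathbf E$ commutes with $e^{s(\mathbf X+\mathbf E)}$. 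Integrating over $[0,1]$ and noting $F(1)=e^{\mathbf X+\mathbf E}$ and $F(0)=e^{\mathbf X}$ yields the identity $e^{\mathbf X+\mathbf E}-e^{\mathbf X} = \int_0^1 e^{s(\mathbf X+\mathbf E)}\,\mathbf E\,e^{(1-s)\mathbf X}\,ds$.

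Next I would take the submultiplicative norm through the integral, obtaining $\|e^{\mathbf X+\mathbf E}-e^{\mathbf X}\| \le \int_0^1 \|e^{s(\mathbf X+\mathbf E)}\|\,\|\mathbf E\|\,\|e^{(1-s)\mathbf X}\|\,ds$, and then apply the growth hypothesis to each exponential factor: $\|e^{s(\mathbf X+\mathbf E)}\| \le M e^{\omega s\|\mathbf X+\mathbf E\|} \le M e^{\omega s(\|\mathbf X\|+\|\mathbf E\|)}$ by the triangle inequality for the norm, and $\|e^{(1-s)\mathbf X}\| \le M e^{\omega(1-s)\|\mathbf X\|}$. Multiplying and simplifying the exponents bounds the integrand by $M^2\|\mathbf E\|\,e^{\omega\|\mathbf X\|}\,e^{\omega s\|\mathbf E\|}$; since $s\in[0,1]$ and $\omega,\|\mathbf E\|\ge 0$ we have $e^{\omega s\|\mathbf E\|}\le e^{\omega\|\mathbf E\|}$, so the integrand is at most the constant $M^2\|\mathbf E\|\,e^{\omega(\|\mathbf X\|+\|\mathbf E\|)}$, and integrating over $[0,1]$ gives exactly the claimed bound.

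There is no real obstacle here: the only points needing care are the standard facts that $e^{sA}$ is differentiable along the linear path and commutes with its generator (immediate from the power series), and the interchange of norm and integral, which is legitimate because the integrand is continuous on the compact interval $[0,1]$. A minor remark worth noting is that one could instead integrate $e^{\omega s\|\mathbf E\|}$ exactly to obtain the sharper constant $M^2 e^{\omega\|\mathbf X\|}\,(e^{\omega\|\mathbf E\|}-1)/(\omega\|\mathbf E\|)$, but the looser stated form already suffices for its intended use, where it is specialized with $\mathbf X = -(\mathbf I - \mathbf C - \beta\mathbf V)T$ and $\mathbf E$ proportional to $T\,\Delta\tilde{\mathbf A}$ in the proof of Theorem~\ref{thm-ours}.
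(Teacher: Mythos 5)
Your proof is correct and follows essentially the same strategy as the paper: an integral representation of $e^{\mathbf X+\mathbf E}-e^{\mathbf X}$, followed by submultiplicativity and the growth hypothesis applied to each exponential factor. The only cosmetic difference is that you use the single-integral Duhamel identity $e^{\mathbf X+\mathbf E}-e^{\mathbf X}=\int_0^1 e^{s(\mathbf X+\mathbf E)}\,\mathbf E\,e^{(1-s)\mathbf X}\,ds$ derived from scratch, whereas the paper parametrizes the path $\mathbf X+s\mathbf E$ and invokes van Loan's double-integral formula for $\tfrac{d}{ds}e^{\mathbf X(s)}$; both yield the identical final bound.
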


\begin{proof}
Define path $\mathbf X(s) := \mathbf X + s\mathbf E$ for $s \in [0, 1]$. Then:
\begin{equation}
e^{\mathbf X+\mathbf E} - e^{\mathbf X} = \int_0^1 \frac{d}{ds} e^{\mathbf X(s)} \, ds.
\end{equation}
Using the integral form of the derivative of the matrix exponential~\citep{van1977sensitivity}, we have:
\begin{equation}
\frac{d}{ds} e^{\mathbf X(s)} = \int_0^1 e^{(1 - \theta)\mathbf X(s)} E e^{\theta \mathbf X(s)} \, d\theta.
\end{equation}
Therefore:
\begin{equation}
e^{\mathbf X+\mathbf E} - e^{\mathbf X} = \int_0^1 \left( \int_0^1 e^{(1 - \theta)\mathbf X(s)} E e^{\theta \mathbf X(s)} \, d\theta \right) ds.
\end{equation}
Taking norms and applying submultiplicativity:
\begin{equation}
\|e^{\mathbf X+\mathbf E} - e^{\mathbf X}\| \leq \int_0^1 \int_0^1 \|e^{(1 - \theta)\mathbf X(s)}\| \cdot \|\mathbf E\| \cdot \|e^{\theta \mathbf X(s)}\| \, d\theta ds.
\end{equation}
Using the growth bound assumption:
\begin{equation}
\|e^{(1 - \theta)\mathbf X(s)}\| \leq M e^{\omega (1 - \theta)\|\mathbf X(s)\|}, \quad
\|e^{\theta \mathbf X(s)}\| \leq M e^{\omega \theta \|\mathbf X(s)\|}.
\end{equation}
Multiplying these we have:
\begin{equation}
\|e^{(1 - \theta)\mathbf X(s)}\| \cdot \|e^{\theta \mathbf X(s)}\| \leq M^2 e^{\omega \|\mathbf X(s)\|}.
\end{equation}
Note that $\|\mathbf X(s) \| = \|\mathbf X + s\mathbf E\| \leq \|\mathbf X\| + \|\mathbf E\|$, so:
\begin{equation}
\|e^{\mathbf X+\mathbf E} - e^{\mathbf X}\| \leq \|\mathbf E\| \cdot M^2 \cdot \int_0^1 \int_0^1 e^{\omega \| \mathbf X + s\mathbf E\|} d\theta ds \leq \|\mathbf E\| \cdot M^2 \cdot e^{\omega(\|\mathbf X\| + \|\mathbf E\|)}.
\end{equation}

The existence of $M$ and $\omega$ is guaranteed for every consistent matrix norm~\citep{van1977sensitivity} such as spectral norm $\|\cdot\|_2$ considered in our analysis.
 
\end{proof}

Let $\mathbf L = \mathbf I - \mathbf C - \beta \mathbf V$ and $\mathbf L' = \mathbf I - \mathbf C' - \beta \mathbf V'$. We then apply the above lemma that holds even if $\mathbf L$ and $\mathbf L^\top$ are not commutable:
\begin{equation}\label{eqn-proof-thm32-1}
\|e^{-\mathbf L' T} - e^{-\mathbf LT}\|_2 \leq M^2 T \cdot \|\mathbf L' - \mathbf L\|_2 \cdot e^{\omega T \|\mathbf L\|_2 } \cdot e^{\omega T \|\mathbf L' - \mathbf L\|_2  }.
\end{equation}
For spectral norm $||\cdot||_2$ in our case, the above result holds for $M=1$ and $\omega=1$. 

We next prove the bound for the last term of Eqn.~\ref{eqn-proof-thm32-1} by construction. Notice that the initial states are given by the encoder MLP: $\mathbf Z(0) = \phi_{enc}(\mathbf X)$. According to our data generation hypothesis in Fig.~\ref{fig:scm}, we know that node embeddings are generated from the latents of each node (we use $\mathbf u_u$ to denote the realization of $U_u$), i.e., $\mathbf x_u = g(\mathbf u_u; W)$ and the graph adjacency is generated through a pair-wise function $a_{uv} = h(\mathbf u_u, \mathbf u_v; W, E)$. Since $g$ is injective, we assume $g^{-1}$ as its inverse mapping. We define by $\eta \circ \phi_{enc}$ the function composition of $\eta$ and $\phi_{enc}$ that establishes a mapping from input node features $\mathbf X$ to the attention-based coupling $\mathbf C$. According to the universal approximation results that hold for MLPs on the compact set~\citep{hornik1989multilayer}, we can construct a mapping induced by $\eta \circ \phi_{enc}$ to obtain a propagation matrix in the form of $\mathbf C = \overline{\mathbf C} - (\beta + \epsilon) \mathbf V$, where $\overline{\mathbf C}$ is independent from $\mathbf A$ and $\epsilon>0$ is an arbitrary small number. To be specific, the construction of the mapping can be achieved by $\eta \circ \phi_{enc} = m \circ h \circ g^{-1}$:
\begin{itemize}
    \item $g^{-1}$ maps the input feature $\mathbf x_u$ to $\mathbf u_u$;
    \item $h$ maps $(\mathbf u_u, \mathbf u_v)$ to $a_{uv}$;
    \item $m$ maps $a_{uv}$ to $c_{uv}$, where $c_{uv}$ denotes the $(u, v)$-th entry of $\mathbf C$.
\end{itemize}

Then consider the difference of node representations under topological shifts and we have $\| (\mathbf C' + \beta \mathbf V') - (\mathbf C + \beta \mathbf V) \|_2 = \epsilon \cdot O(\| \Delta \tilde{\mathbf A}\|_2)$. Since $\| \Delta \tilde{\mathbf A}\|_2$ is bounded, for any positive integer $m$, there exists $\epsilon>0$ such that $\exp{(\epsilon \cdot \| \Delta \tilde{\mathbf A}\|_2)} \leq \| \Delta \tilde{\mathbf A}\|_2^m $. Therefore, we have the conclusion:
\begin{equation}
    e^{||\mathbf L' - \mathbf L||_2} = e^{\| (\mathbf C' + \beta \mathbf V') - (\mathbf C + \beta \mathbf V) \|_2} \leq O(\| \Delta \tilde{\mathbf A}\|_2^m).
\end{equation}
The theorem can be concluded by combining the result of Eqn.~\ref{eqn-proof-thm32-1}.

\subsection{Proof for Corollary~\ref{coro-ours}}

The conclusion follows by combing the results of Theorem~\ref{thm-main} and Theorem~\ref{thm-ours}.

\subsection{Proof for Proposition~\ref{prop-1}}

The diffusion equation with the constant coupling matrix $\mathbf C$ has a closed-form solution $\mathbf Z(t) = e^{ -(\mathbf I - \mathbf C) t } \mathbf Z(0)$, $t\geq 0$. 
To prove the proposition, we need to derive the bound of $\| e^{ -(\mathbf I - \mathbf C') T } - e^{ -(\mathbf I - \mathbf C) T } \|_2$ for any $\mathbf C'\neq \mathbf C$. 
According to the result (3.5) of \cite{van1977sensitivity} we have
\begin{equation}\label{eqn-proof-prop1-3}
    \| e^{ -(\mathbf I - \mathbf C') T } - e^{ -(\mathbf I - \mathbf C) T } \|_2 \leq T \| \mathbf C' - \mathbf C \|_2 \| e^{ -(\mathbf I - \mathbf C) T } \|_2 e^{\| (\mathbf C' -\mathbf C) T \|_2 }.
\end{equation}
Given the fact $\mathbf C' - \mathbf C = \tilde{\mathbf A}' - \tilde{\mathbf A} = \Delta \tilde{\mathbf A}$, we have 
\begin{equation}
    \| e^{ -(\mathbf I - \mathbf C') T } - e^{ -(\mathbf I - \mathbf C) T } \|_2 = O(\|\Delta \tilde{\mathbf A}\|_2\exp(\|\Delta \tilde{\mathbf A}\|_2 T)).
\end{equation}
This gives rise to the conclusion that
\begin{equation}
    \| \mathbf Z(T; \mathbf A') - \mathbf Z(T; \mathbf A)\|_2 = O(\|\Delta \tilde{\mathbf A}\|_2\exp(\|\Delta \tilde{\mathbf A}\|_2 T)),
\end{equation}
and we conclude the proof for the proposition.

\subsection{Proof for Corollary~\ref{coro-1}}

By combing the results of Theorem~\ref{thm-main} and Proposition~\ref{prop-1}, we have
\begin{equation}
\begin{split}
    \mathcal D_{ood-model}(\Gamma_\theta, E_{tr}, E_{te}) & = O(\mathbb E_{\mathbf A \sim p(A|E_{tr}), \mathbf A' \sim p(A|E_{te})} [ \| \mathbf Z(T; \mathbf A') - \mathbf Z(T; \mathbf A) \|_2]) \\
    & \leq O(\mathbb E_{\mathbf A \sim p(A|E_{tr}), \mathbf A' \sim p(A|E_{te})} [ \|\Delta \tilde{\mathbf A}\|_2 \exp(\|\Delta \tilde{\mathbf A}\|_2 T) ]).
\end{split}
\end{equation}

\subsection{Extension with Feature Transformations}\label{appendix-proof-dis-trans}

The conclusion of Proposition~\ref{prop-1} and Corollary~\ref{coro-1} can be extended to the cases incorporating feature transformations and non-linear activation in-between propagation layers used in common GNNs, like GCN~\citep{GCN-vallina}. In particular, the diffusion model becomes the piece-wise diffusion equations with $K$ dynamics components as defined by Eqn.~\ref{eqn-grand-w-v2}:
\begin{equation}
    \frac{\partial \mathbf Z(t; k)}{\partial t} = (\mathbf C - \mathbf I)\mathbf Z(t; k), ~~ t \in [t_{k-1}, t_{k}], ~~ \mbox{with initial conditions} ~~ \mathbf Z(t_{k-1}; k) = \phi_{int}^{(k)}(\mathbf Z(t_{k-1}; k-1)),
\end{equation}
where $\phi_{int}^{(k)}$ denotes the node-wise feature transformation of the $k$-th layer. Based on this, can re-use the reasoning line of proofs for Proposition~\ref{prop-1} to each component, and arrive at the exponential bound of node representation within the $k$-th dynamics:
\begin{equation}
    \|\mathbf Z(t_k; \mathbf A', k) - \mathbf Z(t_k; \mathbf A, k)\|_2 = O(\|\Delta \tilde{\mathbf A}\|_2\exp{(\|\Delta \tilde{\mathbf A}\|_2(t_k - t_{k-1}))}).
\end{equation}
By stacking the results for each component, one can obtain the variation magnitude of the node representation yielded by the whole trajectory 
\begin{equation}
    \|\mathbf Z(T; \mathbf A') - \mathbf Z(T; \mathbf A)\|_2 = O(\|\Delta \tilde{\mathbf A}\|_2\exp{(\|\Delta \tilde{\mathbf A}\|_2T)}).
\end{equation}

\section{Approximation Strategies for Diffusion PDE Solutions}\label{appendix-solving}

The closed-form solutions of linear diffusion equations often involve the form of matrix exponential $e^{-\mathbf Lt}$, which is intractable for computing its exact value. There are many established techniques based on numerical approximations, e.g., series expansion, in this fundamental challenge. In our presented model in Sec.~\ref{sec:implement}, we propose two implementation versions based on two approximation ways for handling the closed-form solution of the advective diffusion equations on graphs.

\textbf{Approximation with Linear Systems.} One scalable scheme proposed by \cite{gallopoulos1992efficient} is via the extension of the minimax Pad\'{e}-Chebyshev theory to rational fractions~\citep{golub1989matrix}. This approximation technique has been utilized by \cite{patane2014laplacian} as an effective and efficient method for spectrum-free computation of the diffusion distances in 3D shape analysis. In specific, the matrix exponential of the form $e^{-\mathbf Lt}$ is approximated by the combination of multiple matrix inverses:
\begin{equation}\label{eqn-solver-inverse}
    \exp{(-\mathbf Lt)} \approx - \sum_{i=1}^r  \alpha_i (\mathbf L + \theta_i \mathbf I)^{-1},
\end{equation}
where $\alpha_i$ and $\theta_i$ can be pre-defined parameters \citep{gallopoulos1992efficient}. To unleash the capacity of neural networks, in Sec.~\ref{sec:implement}, our model implementation (\modelinv) extends this scheme to a multi-head network where each head contributes to propagation with independently parameterized attention networks. The matrix inverse is computed with the linear system solver that is available in common deep learning tools (e.g., PyTorch) and supports automatic differentiation.

\textbf{Approximation with Geometric Series.} When the graph sizes become large, the matrix inverse can be computationally expensive. For better scalability, we can use the geometric series for approximation:
\begin{equation}
    (\mathbf L + \theta_i \mathbf I)^{-1} = \sum_{k=0}^{\infty} (-1)^k \theta_i^{-(k+1)}  \mathbf L^k \approx \sum_{k=0}^{K} (-1)^k \theta_i^{-(k+1)} \mathbf L^k.
\end{equation}
In this way, the matrix exponential can be approximately computed via a combination of finite series:
\begin{equation}\label{eqn-solver-series}
    \exp{(-\mathbf Lt)} \approx - \sum_{i=1}^r  \alpha_i \sum_{k=0}^{K} (-1)^k \theta_i^{-(k+1)} \mathbf L^k.
\end{equation}
In our model, the closed-form solution for the PDE induces $\mathbf L = (\mathbf I - \mathbf C - \beta \mathbf V)$, and the summation in Eqn.~\ref{eqn-solver-series} can be expressed as a weighted sum of $\mathbf P^k = (\mathbf C + \beta \mathbf V)^k$ for $k=0, \cdots, K$. Our model implementation (\modelser) proposed in Sec.~\ref{sec:implement} generalizes the weighted sum to a one-layer neural network.

\section{Model Implementations and Algorithms}\label{appendix-implement}

In this section, we provide detailed and self-contained descriptions about our model architectures in Appendix~\ref{appendix-implement-arch}. Then in Appendix~\ref{appendix-implement-apply}, we discuss how to apply our model to various graph-structured data with additional input information.
To make the presentation clear and focused on the model implementation side, we will re-define some notations that are originally defined in Sec.~\ref{sec:implement}, where we formulate the model with the terminology of the PDE domain.

\subsection{Model Architectures}\label{appendix-implement-arch}

The model takes a graph $\mathcal G = (\mathcal V, \mathcal E, \mathbf X, \mathbf A)$ as input, and output prediction in the downstream tasks. We assume the number of nodes in the graph $|\mathcal V| = N$, node feature matrix $\mathbf X \in \mathbb R^{N\times D}$ and graph adjacency matrix $\mathbf A \in \{0, 1\}^{N\times N}$. We use $\mathbf D$ to denote the diagonal degree matrix of $\mathbf A$. The normalized adjacency is denoted by $\tilde{\mathbf A} = \mathbf D^{-1/2}\mathbf A\mathbf D^{-1/2}$, and $\mathbf 1$ is an all-one $N$-dimensional column vector. In this subsection, we assume $\mathcal G$ has no edge weight or edge feature for presentation, and with loss of generality, we will discuss how to incorporate these additional attributes in Appendix~\ref{appendix-implement-apply}.

\subsubsection{Instantiations and Parameterizations}\label{appendix-implement-arch-inst}

Our model is comprised of three modules: the encoder $\phi_{enc}$, the decoder $\phi_{dec}$, and the propagation network in-between the first two. 

\textbf{Encoder:} The node features $\mathbf X = [\mathbf x_u]_{u\in \mathcal V}\in \mathbb R^{N\times D}$ are first mapped to embeddings in the latent space $\mathbf Z^{(0)}=[\mathbf z_u^{(0)}]_{u\in \mathcal V} \in \mathbb R^{N\times d}$ via the encoder: $\mathbf Z^{(0)} = \phi_{enc}(\mathbf X)$. The encoder $\phi_{enc}(\cdot)$ is instantiated as a shallow MLP with non-linear activation (e.g., ReLU).
    
\textbf{Propagation:} The propagation network converts the initial node embeddings $\mathbf Z^{(0)}$ to the node representations $\mathbf Z =[\mathbf z_u]_{u\in \mathcal V}\in \mathbb R^{N\times d}$ (where $\mathbf Z^{(0)}$ and $\mathbf Z$ are the re-defined counterparts of $\mathbf Z(0)$ and $\mathbf Z(T)$, respectively, presented in Sec.~\ref{sec:implement}). The propagation network is implemented via a multi-head network with $H$ heads involving the attention network $\eta^{(h)}(\cdot, \cdot)$ and feature transformation network $\phi_{FC}^{(h)}(\cdot)$. The latter is instantiated as a fully-connected layer $\mathbf W_{O, h}$, and the attention network is instantiated as a normalized dot-product positive similarity function:
\begin{equation}\label{eqn-attn-def}
\begin{split}
    & \eta^{(h)} (\mathbf z_u^{(0)}, \mathbf z_v^{(0)}) = 1 + \left ( \frac{\mathbf W_{Q, h} \mathbf z_u^{(0)}}{\|\mathbf W_{Q, h} \mathbf z_u^{(0)}\|_2} \right )^\top \left ( \frac{\mathbf W_{K, h} \mathbf z_v^{(0)}}{\|\mathbf W_{K, h} \mathbf z_v^{(0)}\|_2} \right ), \\
    & \mathbf C_h = \{c_{uv}^{(h)}\}, \quad c_{uv}^{(h)} = \frac{\eta^{(h)} (\mathbf z_u^{(0)}, \mathbf z_v^{(0)})}{\sum_{w\in \mathcal V} \eta^{(h)} (\mathbf z_u^{(0)}, \mathbf z_w^{(0)})},
\end{split}
\end{equation}
where $\mathbf W_{Q, h} \in \mathbb R^{d\times d}$ and $\mathbf W_{K, h} \in \mathbb R^{d\times d}$ are trainable weights for query and key, respectively, of the $h$-th head. Then the node representations will be computed in different ways by two models.

\begin{itemize}
    \item For \modelinv, the node representations are calculated via
    \begin{equation}\label{eqn-model-inverse-prop}
    \begin{split}
        & \mathbf L_h = (1 + \theta)\mathbf I - \mathbf C_h - \beta \tilde{\mathbf A}, ~~ h=1, \cdots, H\\
        & \mathbf Z_h = \mbox{linsolver}(\mathbf L_h, \mathbf Z^{(0)}), ~~ h=1, \cdots, H\\
        & \mathbf Z = \sum_{h=1}^H  \mathbf Z_h \mathbf W_{O, h},
    \end{split}
    \end{equation}
    where $\mathbf W_{O, h} \in \mathbb R^{d\times d}$ is a trainable weight matrix. Alg.~\ref{alg-model-inverse} summarizes the feed-forward computation of \modelinv.

    \item For \modelser, the node representations are computed by
    \begin{equation}\label{eqn-model-series-prop}
    \begin{split}
        & \mathbf P_h = \mathbf C_h + \beta \tilde{\mathbf A},~~ h=1, \cdots, H \\
        & \mathbf Z^{(k)}_h = \mathbf P_h \mathbf Z^{(k-1)}_h, ~~k = 1, \cdots K,~~ h=1, \cdots, H \\
        & \mathbf Z = \sum_{h=1}^H  [\mathbf Z^{(0)}_h, \mathbf Z^{(1)}_h, \cdots,  \mathbf Z^{(K)}_h]  \mathbf W_{O, h},
    \end{split}
    \end{equation}
    where $\mathbf W_{O, h} \in \mathbb R^{(K+1)d\times d}$ is a trainable weight matrix. To accelerate the computation of Eqn.~\ref{eqn-model-series-prop}, we can inherit the strategy used in \cite{difformer} and alter the order of matrix products, which reduces the time and space complexity to $\mathcal O(N)$ (see Appendix~\ref{appendix-implement-arch-linear} for detailed illustration). Alg.~\ref{alg-model-series} presents the feed-forward computation of \modelser that only requires $\mathcal O(N)$ algorithmic complexity.
\end{itemize}

\textbf{Decoder:}  The decoder $\phi_{dec}(\cdot)$ transforms the node representations into prediction. Depending on the specific downstream tasks, the decoder can be implemented in different ways:
\begin{equation}
\begin{split}
    & \mbox{(node-level prediction):}~~ \hat y_u = \mbox{MLP}(\mathbf z_u) \\
    & \mbox{(graph-level prediction):}~~ \hat y = \mbox{MLP}(\mbox{SumPooling}(\{\mathbf z_u\}_{u\in \mathcal V})) \\
    & \mbox{(edge-level prediction):}~~ \hat y_{uv} = \mbox{MLP}([\mathbf z_u, \mathbf z_v]). \\
\end{split}
\end{equation}
In particular, the softmax activation is used for output in classification tasks. For training, we adopt standard loss functions, i.e., cross-entropy for classification and mean square loss for regression.

\subsubsection{Acceleration of \modelser with Linear Complexity}\label{appendix-implement-arch-linear}

We illustrate how to achieve the propagation of \modelser in Eqn.~\ref{eqn-model-series-prop} with $\mathcal O(N)$ complexity. With the query and key matrices defined by $\mathbf Z_{Q, h} = \left [ \frac{\mathbf W_{Q, h} \mathbf z_u^{(0)}}{\| \mathbf W_{Q, h} \mathbf z_u^{(0)} \|_2} \right ]_{u\in \mathcal V}$ and $\mathbf Z_{K, h} = \left [ \frac{\mathbf W_{K, h} \mathbf z_u^{(0)}}{\| \mathbf W_{K, h} \mathbf z_u^{(0)} \|_2} \right ]_{u\in \mathcal V}$, the attention matrix $\mathbf C_h$ in Eqn.~\ref{eqn-attn-def} is computed by (in the matrix form used for implementation)
\begin{equation}
    \mathbf C_h = \mbox{diag}^{-1}\left (N + \mathbf Z_{Q, h} \left ( \mathbf Z_{K, h} \right )^\top \mathbf 1  \right ) \left( \mathbf 1 \mathbf 1^\top + \mathbf Z_{Q, h} \left ( \mathbf Z_{K, h} \right )^\top \right ).
\end{equation}
Computing the above result requires $\mathcal O(N^2)$ time and space complexity. Still, if we consider the feature propagation with $\mathbf C_h$, we have
\begin{equation}
    \begin{split}
        \mathbf C_h \mathbf Z_h^{(k)} & = \mbox{diag}^{-1}\left (N + \mathbf Z_{Q, h} \left ( \mathbf Z_{K, h} \right )^\top \mathbf 1  \right ) \cdot \left( \mathbf 1 \mathbf 1^\top + \mathbf Z_{Q, h} \left ( \mathbf Z_{K, h} \right )^\top \right ) \cdot \mathbf Z_h^{(k)} \\
        & = \mbox{diag}^{-1}\left (N + \mathbf Z_{Q, h} \left ((\mathbf Z_{K, h} )^\top \mathbf 1 \right ) \right ) \cdot \left [\mathbf 1 \left(\mathbf 1^\top \mathbf Z^{(k)}_h \right ) + \mathbf Z_{Q, h} \left ((\mathbf Z_{K, h}  )^\top \mathbf Z_h^{(k)} \right ) \right ], \\
    \end{split}
\end{equation}
where the equality is achieved by altering the order of matrix products. The above computation only requires $\mathcal O(N)$ time and space complexity. The feed-forward computation of \modelser with $\mathcal O(N)$ acceleration is summarized in Alg.~\ref{alg-model-series}.

\begin{algorithm}[t]
	\caption{Feed-Forward of the Model \modelinv.}
	\label{alg-model-inverse}
	\textbf{INPUT:} Node feature matrix $\mathbf X$ and normalized adjacency matrix $\tilde{\mathbf A}$.\\
    $\mathbf Z^{(0)} = \phi_{enc}(\mathbf X)$\; \\
	\For{$h=1,\cdots, H$}{
 $\mathbf Z_{Q, h} = \left [ \frac{\mathbf W_{Q, h} \mathbf z_u^{(0)}}{\| \mathbf W_{Q, h} \mathbf z_u^{(0)} \|_2} \right ]_{u\in \mathcal V} , ~~~ \mathbf Z_{K, h} = \left [ \frac{\mathbf W_{K, h} \mathbf z_u^{(0)}}{\| \mathbf W_{K, h} \mathbf z_u^{(0)} \|_2} \right ]_{u\in \mathcal V}$ \; \\
        $\mathbf U_h = \mathbf 1 \mathbf 1^\top + \mathbf Z_{Q, h} (\mathbf Z_{K, h} )^\top$\; \\
        $\mathbf C_h = \mbox{diag}^{-1} \left (\mathbf U_h \mathbf 1  \right ) \mathbf U_h$\; \\
        $\mathbf L_h = (1 + \theta)\mathbf I - \mathbf S_h - \beta \tilde{\mathbf A}$\; \\
        $\mathbf Z_h = \mbox{linsolver}(\mathbf L_h, \mathbf Z)$\; 
	    }
     $\mathbf Z = \sum_{h=1}^H \mathbf Z_h \mathbf W_{O, h}$\;  \\
	\textbf{OUTPUT:} Node representations $\mathbf Z$ and predicted labels with $\phi_{dec}(\mathbf Z)$.
\end{algorithm}

\begin{algorithm}[t]
	\caption{Feed-Forward of the Model \modelser (with $\mathcal O(N)$ complexity).}
	\label{alg-model-series}
	\textbf{INPUT:} Node feature matrix $\mathbf X$ and normalized adjacency matrix $\tilde{\mathbf A}$.\\
    $\mathbf Z^{(0)} = \phi_{enc}(\mathbf X)$\; \\
	\For{$h=1,\cdots, H$}{
 $\mathbf Z_{Q, h} = \left [ \frac{\mathbf W_{Q, h} \mathbf z_u^{(0)}}{\| \mathbf W_{Q, h} \mathbf z_u^{(0)} \|_2} \right ]_{u\in \mathcal V} , ~~~ \mathbf Z_{K, h} = \left [ \frac{\mathbf W_{K, h} \mathbf z_u^{(0)}}{\| \mathbf W_{K, h} \mathbf z_u^{(0)} \|_2} \right ]_{u\in \mathcal V}$ \; \\
 $\mathbf N_h = \mbox{diag}^{-1}\left (N + \mathbf Z_{Q, h} \left ((\mathbf Z_{K, h} )^\top \mathbf 1 \right ) \right )$\; \\
 $\mathbf Z^{(0)}_h = \mathbf Z^{(0)}$\; \\
    \For{$k=1,\cdots, K$}{
    $\mathbf Z_h^{(k)} = \mathbf N_h \cdot \left [\mathbf 1 \left(\mathbf 1^\top \mathbf Z^{(k-1)}_h \right ) + \mathbf Z_{Q, h} \left ((\mathbf Z_{K, h}  )^\top \mathbf Z_h^{(k-1)} \right ) \right ] + \beta \tilde{\mathbf A}\mathbf Z_h^{(k-1)} $\;
 }
        $\mathbf Z_h = [\mathbf Z_h^{0)}, \mathbf Z_h^{(1)}, \cdots, \mathbf Z_h^{(K)}]$\; 
	    }
     $\mathbf Z = \sum_{h=1}^H \mathbf Z_h \mathbf W_{O, h}$\;  \\
	\textbf{OUTPUT:} Node representations $\mathbf Z$ and predicted labels with $\phi_{dec}(\mathbf Z)$.
\end{algorithm}

\subsection{Applicability of Our Model}\label{appendix-implement-apply}

In the main paper, we assume unweighted graphs without edge attribute features for model formulation. Without loss of generality, we next discuss how to extend our model to handle the edge weights and edge features. 

\textbf{Edge Weights.} For weighted graphs, the adjacency matrix $\mathbf A$ would become a real matrix where the entry $a_{uv}$ denotes the weight on the edge $(u, v)\in \mathcal E$. In this situation, we still have the corresponding normalized adjacency $\tilde{\mathbf A} = \mathbf D^{-1} \mathbf A$ or $\tilde{\mathbf A} = \mathbf D^{-1/2} \mathbf A \mathbf D^{-1/2}$, where $\mathbf D = \mbox{diag}([d_u]_{u\in \mathcal V})$ and $d_u = \sum_{v, (u, v)\in \mathcal E} a_{uv}$. Our model implementations can be trivially generalized to this case by using $\tilde{\mathbf A}$ as the propagation matrix for local message passing.

\textbf{Edge Features.} If the graph contains edge features, denoted by $\mathbf E = [\mathbf e_{uv}]_{(u, v)\in \mathcal E} \in \mathbb R^{|\mathcal E| \times D'}$, we introduce an encoding layer $\mathbf W_E \in \mathbb R^{D'\times d}$ for mapping the edge features into embeddings in the latent space and then incorporate them with node embeddings. In specific, we first compute the edge-to-node signals:
\begin{equation}
    \mathbf M = [\mathbf m_{u}]_{u\in \mathcal V}, ~~~ \mathbf m_u = \sum_{v, (u, v)\in \mathcal E}   \tilde{\mathbf A}_{u, v} \mathbf W_E \mathbf e_{uv}.
\end{equation}
\begin{itemize}
    \item For \modelinv, we can modify Eqn.~\ref{eqn-model-inverse-prop} as
    \begin{equation}
    \begin{split}
        & \mathbf L_h = (1 + \theta)\mathbf I - \mathbf C_h - \beta \tilde{\mathbf A}, \\
        & \mathbf Z_h = \mbox{linsolver} \left (\mathbf L_h, (\mathbf Z^{(0)} + \mathbf M ) \right ), \\
        & \mathbf Z = \sum_{h=1}^H  \mathbf Z_h \mathbf W_{O, h}.
    \end{split}
    \end{equation}

    \item For \modelser, we can modify Eqn.~\ref{eqn-model-series-prop} to be
    \begin{equation}
    \begin{split}
        & \mathbf P_h = \mathbf C_h + \beta \tilde{\mathbf A}, \\
        & \mathbf Z^{(k)} = \mathbf P_h (\mathbf Z^{(k-1)} + \mathbf M), ~~ k = 1, \cdots K, \\
        & \mathbf Z = \sum_{h=1}^H  [\mathbf Z^{(0)}, \mathbf Z^{(1)}, \cdots,  \mathbf Z^{(K)}]  \mathbf W_{O, h}.
    \end{split}
    \end{equation}
\end{itemize}

\section{Experiment Details}\label{appendix-exp}

We supplement details for our experiments, regarding datasets, competitors, and implementations, for facilitating the reproducibility.  

\subsection{Datasets}\label{appendix-exp-dataset}

The datasets we use for the experiments in Sec.~\ref{sec:exp} span diverse domains and learning tasks. We summarize the statistics and brief descriptions for each dataset in Table~\ref{tbl-dataset}, with the detailed information presented in the following subsections.

\begin{table}[h]
\centering
\caption{Statistics and descriptions for experimental datasets.}
\label{tbl-dataset}
\footnotesize
    \resizebox{0.9\textwidth}{!}{
\begin{threeparttable}
\begin{tabular}{@{}c|cccccc@{}}
\toprule
Dataset & \#Nodes & \#Edges & \#Graphs & Train/Val/Test Split & Task & Metric  \\
\midrule
Synthetic-h & 1,000 & 14,064 - 32,066 & 12 & SBM (Homophily) & Node Regression & RMSE \\
Synthetic-d & 1,000 & 7,785 - 13,912 & 12 & SBM (Density) & Node Regression & RMSE \\
Synthetic-b & 1,000 & 14,073 - 59,936 & 12 & SBM (Block Number) & Node Regression & RMSE \\
\midrule
Twitch     & 1,912 - 9,498 & 31,299 - 153,138 & 7 & Geographic Domain & Node Classification & ROC-AUC  \\
Arxiv & 169,343 & 1,166,243 & 1 & Publication Time & Node Classification & Accuracy  \\
\midrule
OGB-BACE & 10 - 97 & 10 - 101& 1,513&Molecular Scaffold & Graph Classification & ROC-AUC\\
OGB-SIDER & 1 - 492 & 0 - 505 & 1,427 & Molecular Scaffold & Graph Classification & ROC-AUC\\
\midrule
DPPIN-nr & 143 - 5,003 & 22 - 25,924 & 12 & Protein Identification Method & Node Regression & RMSE\\
DPPIN-er & 143 - 5,003 & 22 - 25,924 & 12 & Protein Identification Method & Edge Regression & RMSE\\
DPPIN-lp & 143 - 5,003 & 22 - 25,924 & 12 & Protein Identification Method & Link Prediction & ROC-AUC\\
\midrule
HAM & 8 - 25 & 7 - 29 &1,987 & Relative Molecular Mass &Edge Classification& Accuracy\\
\bottomrule
\end{tabular}
\end{threeparttable}}
\end{table}

\subsubsection{Synthetic Datasets}\label{appendix-exp-dataset-synthetic}

The synthetic datasets used in Sec.~\ref{sec:exp-synthetic} simulate the graph data generation in Sec.~\ref{sec:discuss-scm}, where the topological distribution shifts are caused by the difference of environments across training and testing data. In specific, we generate graphs of $|\mathcal V| = 1000$ nodes, with the node features $\mathbf X$, graph adjacency matrix $\mathbf A$ and labels $\mathbf Y$ generated by the following process.
\begin{itemize}
    \item Each node $u\in \mathcal V$ is assigned with a scalar $u_u$ randomly sampled from the uniform distribution $U[0, 1]$.

    \item For the generation of node features $\mathbf X = [\mathbf x_u]_{u\in \mathcal V}$, we instantiate the node-wise function $g$ as a 2-layer MLP with ReLU activation and 4-dimensional output. Then the node feature $\mathbf x_u$ is generated through $\mathbf x_u = \mbox{MLP}(u_u)$.

    \item For the generation of graph adjacency $\mathbf A = [a_{uv}]_{u, v\in \mathcal V}$, we instantiate the pairwise function $h$ as the stochastic block model~\citep{snijders1997estimation} which generates edges according to the intra-block edge probability ($p_1$) and the inter-block edge probability ($p_2$). We map the nodes into $b$ blocks by the following rule: for node $u\in \mathcal V$, we assign it to the $k$-th block if $v_u \in [\frac{k-1}{b}, \frac{k}{b})$ (where $1\leq k\leq b$). Then the edge $a_{uv}$ is randomly generated from a bernoulli distribution with $p_1$ if $u$ and $v$ are in the same block, and $p_2$ otherwise. 
    
    \item For the generation of labels $\mathbf Y$, we consider the regression tasks and each node has a label $y_u$ generated through an ensemble model of a 2-layer GCN and a 1-layer DIFFormer (without using the graph-based propagation) with random initializations: $\mathbf Y = \mbox{gcn}(\mathbf U, \mathbf A) + \mbox{difformer}(\mathbf U, \mathbf A)$, where $\mathbf U = [u_u]_{u\in \mathcal V}$.
\end{itemize}

Using the above data generation, we create 12 graphs with the indices \#1$\sim$ \#12, and use the graph \#1 for training, the graph \#2 for validation, and the graphs \#3$\sim$ \#12 for testing. The topological distribution shifts are introduced in three different ways as described in Sec.~\ref{sec:exp-synthetic}, where in each case, the detailed configurations for $p_1$, $p_2$ and $b$ are illustrated below. 

\begin{itemize}
    \item \emph{Homophily Shift}: $p_1 = 0.1$, $b = 5$ and $p_2 = 0.01 + 0.05 * \frac{1}{12} * (i-1)$ for the graph \#$i$.

    \item \emph{Density Shift}: $b=5$, $p_1 = 0.1 + 0.1 * \frac{1}{12} * (i-1)$ and $p_2 = 0.01 + 0.1 * \frac{1}{12} * (i-1)$ for the graph \#$i$.

    \item \emph{Block Shift}: $p_1 = 0.1$, $p_2 = 0.01$ and $b = 5 + (i-1)$ for the graph \#$i$.
\end{itemize}

\subsubsection{Information Networks}

The citation network \texttt{Arxiv} provided by \cite{ogb-nips20} consists of a single graph with 0.16M nodes, where each node represents a paper with the publication year (ranging from 1960 to 2020) and a subarea id (from 40 different subareas in total). The node attribute features are 128-dimensional obtained by averaging the word embeddings of the paper's title and abstract. The edges are given by the citation relationship between papers. The predictive task is to estimate the paper's subarea. We use the publication years to split the data: papers published before 2014 for training, within the range from 2014 to 2017 for validation, and on 2018/2019/2020 for testing. Since there is a single graph, to increase the difficulty of generalization, we consider the inductive setting: the testing nodes are not contained in the training graph. Table~\ref{tab:dataset-arxiv} demonstrates the dissimilar statistics for training/validation/testing graphs, manifesting the existence of topological shifts. Following the common practice, we use Accuracy as the evaluation metric.

\begin{table}[h]
\centering
\caption{Statistics for training/validation/testing graphs on \texttt{Arxiv}. There is a single citation network that augments with time evolving, and with the data splits in the inductive setting, the previous graph is contained by the subsequent one.}
\label{tab:dataset-arxiv}
    \resizebox{0.7\textwidth}{!}{
\begin{tabular}{l|c|c|c|c|c}
\toprule
 & \textbf{Train (1960-2014)} & \textbf{Valid (2015-2017)} & \textbf{Test 1 (2018)} & \textbf{Test 2 (2019)} & \textbf{Test 3 (2020)} \\
 \midrule
\# Target Nodes & 41,125 & 49,816 & 29,799 & 39,711 & 8,892\\
\# All Nodes &   41,125 & 90,941 & 120,740 & 160,451 & 169,343\\
\# All Edges & 102,316 &  374,839 & 622,466 & 1,061,197 & 1,166,243\\
Max Degrees & 275 & 3,036 & 6,251 & 12,006 & 13,161\\
Avg Degrees & 4.98 & 8.24 & 10.31 & 13.23 & 13.77\\
\bottomrule
\end{tabular}
}
\end{table}

\texttt{Twitch}~\citep{twitch-data} is comprised of seven dis-connected graphs, where each node represents a Twitch user and edges indicate the friendship. Each graph is collected from the social newtork in a particular region, including DE, ENGB, ES, FR, PTBR, RU and TW. The node features are multi-hot with 2,545 dimensions indicating the user's profile. The predictive task is to classify the gender of the user. The seven networks with sizes ranging from 2K to 9K have distinct structural characteristics (such as densities and maximum degrees) as observed by \cite{eerm}. We therefore split the data according to the geographic information: use the network DE for training, ENGB for validation, and the remaining networks for testing. The evaluation metric is ROC-AUC for binary classification.

\subsubsection{Biological Protein Interactions}

\texttt{DPPIN}~\citep{fu2022dppin} contains 12 individual dynamic network datasets at different scales, and each dataset is a dynamic protein-protein
interaction network that describes the protein-level interactions of yeast
cells. Each graph dataset is obtained by one protein identification method and consists of 36 graph snapshots, wherein each node denotes a protein that has a sequence of 1-dimensional continuous features with 36 time stamps. This records the evolution of gene expression values within metabolic cycles of yeast cells. The edges in the graph are determined by co-expressed protein pairs at one time, and each edge is associated with a co-expression correlation coefficient. 

We consider the predictive tasks within each graph snapshot and ignore the temporal evolution between different snapshots. In specific, we use the graph topology of each snapshot as the observed graph adjacency $\mathbf A$ and use the gene expression values at the previous 10 time steps as node features $\mathbf X$. On top of this, we consider three different predictive tasks: 1) node regression for gene expression value at the current time (measured by RMSE); 2) edge regression for predicting the co-expression correlation coefficient (measured by RMSE); 3) link prediction for identifying co-expressed protein pairs (measured by ROC-AUC). Given the fact that each graph dataset has distinct sizes (ranging from 143 to 5,003 nodes) and distributions of 3-cliques and 4-cliques (ranging from 0 to hundreds)~\citep{fu2022dppin}, we consider the dataset-level data splitting and use 6/1/5 graph datasets for training/validation/testing, which introduces topological distribution shifts.

\subsubsection{Molecular Mapping Operator Generation}

The \emph{Human Annotated Mappings} (\texttt{HAM}) dataset~\citep{li2020graph} consists of 1,206 molecules with expert annotated mapping operators, i.e., a representation of how atoms are grouped in a molecule. The latter segments the atoms of a molecule into groups of varying sizes. As an important step in molecular dynamics simulation, generating coarse-grained mapping operators aims to reproduce the mapping operators produced by experts. This task can be modeled as a graph segmentation problem~\citep{li2020graph} which takes a molecule graph as input and outputs the labels for each edge that indicates if there is cut needed to partition the source and end atoms into different groups. 

For data splits, we calculate the relative molecular mass of each molecule using the RDKit package\footnote{\url{https://github.com/rdkit/rdkit}}, and rank the molecules with increasing mass. Then we use the first 70\% molecules for training, the following 15\% for validation, and the remaining for testing. This splitting protocol partitions molecules with different weights, and requires generalization from small molecules in the training set to larger molecules in the testing set.

\begin{table}[h]
\centering
\caption{The range of relative molecular mass for training/validation/testing molecules in \texttt{HAM}.}
\label{tab:dataset-arxiv}
    \resizebox{0.6\textwidth}{!}{
\begin{tabular}{l|c|c|c}
\toprule
 & \textbf{Train} & \textbf{Valid} & \textbf{Test}  \\
 \midrule
Relative Molecular Mass &  108.18 $\sim$ 273.34 & 273.34 $\sim$ 311.14
 &  311.14 $\sim$ 762.94      \\
\bottomrule
\end{tabular}
}
\end{table}

\subsection{Competitors}\label{appendix-exp-baseline}

In our experiments, we compare with peer encoder backbones for graph learning tasks. The competitors span three aspects: 1) classical GNNs, 2) diffusion-based GNNs, and 3) graph Transformers. We briefly introduce the competitors and illuminate their connections with our model.

\begin{itemize}
    \item \textbf{GCN}~\citep{GCN-vallina} is a popular model that propagates node embeddings over observed graphs for computing node representations, which can be seen as the discretized version of graph diffusion equations with feature transformations.

    \item \textbf{GAT}~\citep{GAT} introduces attention networks for computing pairwise weights for neighboring nodes in the graph and propagates node signals with adaptive strengths given by the attention weights. GAT can be seen as the discretized version of the graph diffusion equation with time-dependent coupling matrices.

    \item \textbf{SGC}~\citep{SGC-icml19} proposes to simplify the GCN architecture by removing the feature transformations in-between propagation layers, reducing multi-layer propagation to one-layer. This brings up significant acceleration for training and inference. SGC can be seen as the discretization of the linear diffusion equation on graphs. 

    \item \textbf{GDC}~\citep{klicpera2019diffusion} extends the graph convolution operator to graph diffusion convolution derived from the linear diffusion equation on graphs. We use its implementation version based on the heat kernel for diffusion coefficients. 

    \item \textbf{GRAND}~\citep{grand} proposes graph neural diffusion, a continuous PDE model, that generalizes manifold diffusion to graphs and then uses numerical schemes to solve the PDE. We compare with its linear version that implements the linear graph diffusion equation.

    \item \textbf{A-DGNs}~\citep{A-DGNs} is a stable graph neural architecture inspired by ODE on graphs that has provable capability to preserve long-range information between nodes and avoid gradient vanishing or explosion in training.

    \item \textbf{CDE}~\citep{CDE} is a recently proposed continuous model derived from convection diffusion equations that is designed for addressing heterophilic graphs.

    \item \textbf{GraphTrans}~\citep{graphtrans-neurips21} is a recently proposed Transformer for graph-structured data that satisfies the permutation-invariant property. The model architecture sequentially combines GNNs and Transformers in order, where the GNN can learn local, short-range structures and the Transformer can capture global, long-range relationships.

    \item \textbf{GraphGPS}~\citep{graphgps} introduces a scalable and powerful Transformer model class for graph data and achieves state-of-the-art results on molecular property prediction benchmarks. We use its scalable implementation version with the Performer attentions~\citep{performer-iclr21}.

    \item \textbf{DIFFormer}~\citep{difformer} is a scalable Transformer inspired by diffusion on graphs. The model is comprised of principled attention layers, which implements the diffusion iterations minimizing a global energy. The architecture integrates graph-based feature propagation and global attention in each layer. We use its version with simple diffusivity that only requires linear complexity and yields state-of-the-art results on some large-graph benchmarks.
\end{itemize}

\subsection{Implementation Details}\label{appendix-exp-implement}

\textbf{Computation Systems.} All the experiments are run on NVIDIA 3090 with 24GB memory. The environment is based on Ubuntu 18.04.6, Cuda 11.6, Pytorch 1.13.0 and Pytorch Geometric 2.1.0. 

\textbf{Evaluation Protocol.} For all the experiments, we run the training and evaluation of each model with five independent trials, and report the mean and standard deviation results in our tables and figures. In each run, we train the model with a fixed budget of epochs and record the testing performance produced by the epoch where the model yields the best performance on validation data.

\textbf{Hyper-Parameters.} We use the grid search for hyper-parameter tuning on the validation dataset with the searching space described below.
\begin{itemize}
    \item For information networks, hidden size $d\in \{32, 64, 128\}$, learning rate $\in \{0.0001, 0.001\}$, head number $H\in \{1, 2, 4\}$, the weight for local message passing $\beta \in \{0.2, 0.5, 0.8, 1.0\}$, 
    and the order of propagation (only used for \modelser) $K\in \{1, 2, 4\}$.
    
    \item For molecular datasets, hidden size $d=256$, learning rate 
    $\in\{0.01,0.005,0.001,0.0005,0.0001,0.00005\}$, dropout $\in\{0.0,0.1,0.3,0.5\}$, head number $H\in\{1, 2, 4\}$, the weight for local message passing $\beta \in \{ 0.5, 0.75, 1.0\}$, the coefficient for identity matrix (only used for \modelinv) $\theta \in \{0.5, 1.0\}$, and the order of propagation (only used for \modelser) $K\in \{1, 2, 3, 4\}$.

    \item For protein interaction networks, hidden size $d \in \{32, 64\}$, learning rate $\in \{0.01, 0.001, 0.0001\}$, head number $H\in \{1,2,4\}$, the weight for local message passing $\beta \in \{0.3, 0.5, 0.8, 1.0\}$, the coefficient for identity matrix (only used for \modelinv) $\theta \in \{0.5, 1.0\}$, and the order of propagation (only used for \modelser) $K \in \{1,2,3,4\}$.
\end{itemize}

\section{Additional Experimental Results}\label{appendix-res}

In this section, we supplement more experimental results including additional results for main experiments, ablation studies and hyper-parameter analysis.

\subsection{Supplementary Results for Main Experiments}\label{appendix-res-main}

In Table~\ref{tbl-res-twitch}, we present the ROC-AUC for each graph of \texttt{Twitch}. In Fig.~\ref{fig:ham-demo-appendix1} and \ref{fig:ham-demo-appendix2}, we show the generated results for more testing cases of molecular mapping operators in \texttt{HAM}.

\begin{table}[h]
\centering
\caption{Result of ROC-AUC for each graph on \texttt{Twitch} where we use nodes in different networks to split the training, validation and testing data.}
\label{tbl-res-twitch}
    \resizebox{0.9\textwidth}{!}{
\begin{tabular}{l|c|c|c|c|c|c|c}
\toprule
                      & \textbf{Train (DE)}   & \textbf{Valid (ENGB)}   & \textbf{Test 1 (ES)}  & \textbf{Test 2 (FR)}  & \textbf{Test 3 (PTBR)}  & \textbf{Test 4  (RU)}  & \textbf{Test 5 (TW)}  \\
                      \hline
\textbf{MLP} & 75.26 $\pm$ 1.49 & 63.48 $\pm$ 0.15 & 65.19 $\pm$ 0.37 & 62.25 $\pm$ 0.28 & 65.01 $\pm$ 0.19 & 54.92 $\pm$ 0.33 & 58.23 $\pm$ 0.13 \\
\textbf{GCN}          & 69.55 $\pm$ 0.34 & 60.76 $\pm$ 0.21 & 63.75 $\pm$ 0.44 & 61.56 $\pm$ 0.56 & 63.26 $\pm$ 0.42 & 54.51 $\pm$ 0.21 & 55.72 $\pm$ 0.28 \\
\textbf{GAT}          & 69.28 $\pm$ 1.14 & 59.80 $\pm$ 0.42 & 62.81 $\pm$ 1.16 & 60.65 $\pm$ 0.92 & 63.13 $\pm$ 1.25 & 53.80 $\pm$ 0.27 & 55.31 $\pm$ 0.94 \\
\textbf{SGC}          & 71.68 $\pm$ 0.33 & 61.98 $\pm$ 0.07 & 65.12 $\pm$ 0.15 & 63.06 $\pm$ 0.12 & 64.14 $\pm$ 0.19 & 55.17 $\pm$ 0.06 & 56.83 $\pm$ 0.20 \\
\textbf{GDC}          & 80.73 $\pm$ 1.69 & 62.14 $\pm$ 0.30 & 66.33 $\pm$ 0.25 & 60.70 $\pm$ 0.51 & 64.21 $\pm$ 0.23 & 56.60 $\pm$ 0.24 & 58.97 $\pm$ 0.37 \\
\textbf{GRAND}        & 79.17 $\pm$ 0.74 & 62.48 $\pm$ 0.39 & 66.52 $\pm$ 0.23 & 61.62 $\pm$ 0.62 & 64.44 $\pm$ 0.73 & 56.42 $\pm$ 0.38 & 59.27 $\pm$ 0.57 \\
\textbf{A-DGNs} & 78.91 $\pm$ 0.52 & 61.52 $\pm$ 0.34 & 65.82 $\pm$ 0.21 & 60.59 $\pm$ 0.56 & 63.49 $\pm$ 0.63 & 55.74 $\pm$ 0.32 & 58.31 $\pm$ 0.53 \\
\textbf{CDE} & 80.21 $\pm$ 0.35 & 62.51 $\pm$ 0.21 & 65.62 $\pm$ 0.17 & 60.93 $\pm$ 0.55 & 63.92 $\pm$ 0.57 & 55.79 $\pm$ 0.31 & 58.42 $\pm$ 0.42 \\
\textbf{GraphTrans}   & 79.17 $\pm$ 0.74 & 62.48 $\pm$ 0.39 & 66.52 $\pm$ 0.23 & 61.62 $\pm$ 0.62 & 64.44 $\pm$ 0.73 & 56.42 $\pm$ 0.38 & 59.27 $\pm$ 0.57 \\
\textbf{GraphGPS}     & 74.49 $\pm$ 1.35 & 63.40 $\pm$ 0.31 & 66.85 $\pm$ 0.32 & 63.74 $\pm$ 0.37 & 65.03 $\pm$ 0.58 & 56.39 $\pm$ 0.39 & 58.63 $\pm$ 0.83 \\
\textbf{DIFFormer}    & 73.12 $\pm$ 0.52 & 63.06 $\pm$ 0.09 & 66.68 $\pm$ 0.15 & 64.44 $\pm$ 0.13 & 65.23 $\pm$ 0.20 & 55.75 $\pm$ 0.12 & 58.91 $\pm$ 0.37 \\
\textbf{\modelser}  & 75.46 $\pm$ 0.28 & 63.53 $\pm$ 0.14 & 66.78 $\pm$ 0.14 & 63.35 $\pm$ 0.10 & 65.68 $\pm$ 0.06 & 56.27 $\pm$ 0.06 & 60.48 $\pm$ 0.21 \\
\bottomrule
\end{tabular}
}
\end{table}

\subsection{Ablation Studies and Hyper-Parameter Analaysis}\label{appendix-res-ablation}

We next conduct more analysis on our proposed model by ablation studies on some key components and investigating the impact of hyper-parameters.

\textbf{Diffusion and Advection.} We conduct ablation studies on the advection term (i.e., the local message passing) and the diffusion term (i.e., the global attention). In Table~\ref{tab:abl-arxiv} we report the results for \modelser on \texttt{Arxiv}, which shows that the two modules are indeed effective for producing superior generalization on node classification tasks. 


\begin{table}[h]
\centering
\caption{Ablation studies for \modelser on \texttt{Arxiv}.}
\label{tab:abl-arxiv}
    \resizebox{0.8\textwidth}{!}{
\begin{tabular}{l|c|c|c|c|c}
\toprule
 & \textbf{Train (1960-2014)} & \textbf{Valid (2015-2017)} & \textbf{Test 1 (2018)} & \textbf{Test 2 (2019)} & \textbf{Test 3 (2020)} \\
 \midrule
\textbf{\model} & 63.79 $\pm$ 0.66 & 55.25 $\pm$ 0.14 & 53.41 $\pm$ 0.48 & 51.53 $\pm$ 0.60 & 49.64 $\pm$ 0.54 \\
\textbf{\model} w/o diffusion & 64.65 $\pm$ 1.10 & 55.00 $\pm$ 0.12 & 52.45 $\pm$ 0.27 & 50.18 $\pm$ 0.18 & 48.01 $\pm$ 0.24 \\
\textbf{\model} w/o advection & 61.84 $\pm$ 0.79 & 54.31 $\pm$ 0.24 & 51.64 $\pm$ 0.59 & 49.65 $\pm$ 0.53 & 47.06 $\pm$ 0.69 \\
\bottomrule
\end{tabular}
}
\end{table}

\textbf{Impact of $K$.} The hyper-parameter $K$ (used for \modelser) controls the number of propagation orders in the model. In fact, the value of $K$ would impact how the structural information is utilized by the model. If $K$ is small, the model only utilizes the low-order structure, and large $K$ enables the usage of high-order structural information. Fig.~\ref{fig:hyper-K} presents the model performance on \texttt{Arxiv} and \texttt{DPPIN} with $K$ ranging from 1 to 6. We observe that the optimal settings for $K$ are different across cases, and using larger $K$ can not necessarily yield better performance. This is because in these cases, the low-order structural information is informative enough for desired generalization.

\begin{figure}[t!]
    \centering
    \includegraphics[width=0.98\linewidth]{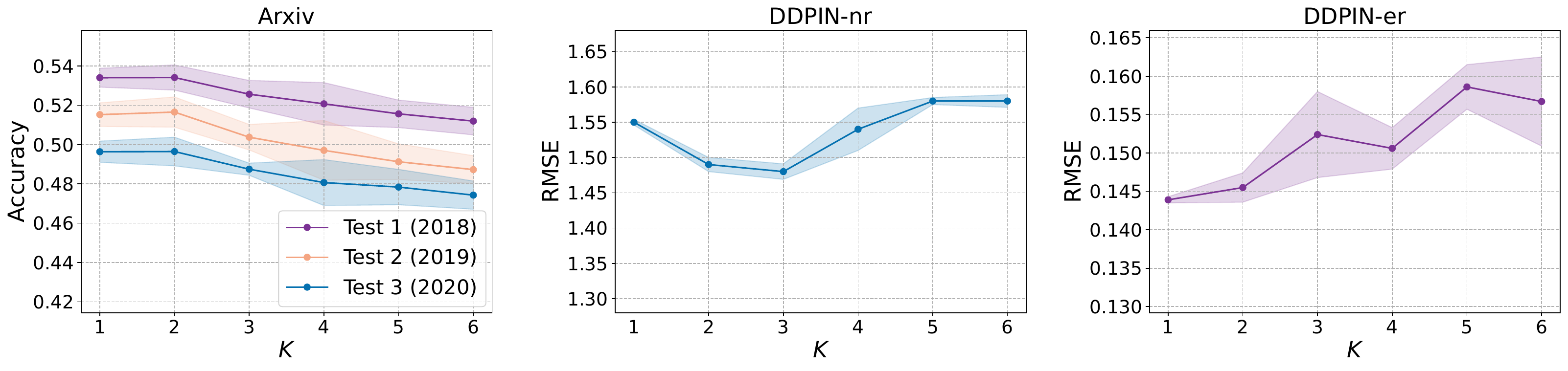}
    \caption{Model performance on \texttt{Arxiv} and \texttt{DPPIN} with different settings of $K$. The latter involves node regression (nr) and edge regression (er) tasks.}
    \label{fig:hyper-K}
\end{figure}

\textbf{Impact of $\theta$.} Finally, we study the impact of $\theta$ used for computing $\mathbf L_h$ in \modelinv. Table~\ref{tab:hyper-theta} shows the performance of \modelinv on \texttt{DPPIN} with different $\theta$'s. We found that using $\theta$ close to 1 can bring up stably good performance, which is consistently manifested by experiments on other cases. Still, if $\theta$ is too small, e.g., close to 0, it would sometimes lead to numerical instability. This is due to that, in such a case, the matrix $\mathbf L_h$ could become a singular matrix.

\begin{table}[h]
\centering
\caption{Testing accuracy of \modelinv with different $\theta$'s in the edge regression task on \texttt{DPPIN}.}
\label{tab:hyper-theta}
    \resizebox{0.35\textwidth}{!}{
\begin{tabular}{c|c|c|c|c}
\toprule
$\theta$ & 0 & 0.5 & 1.0 & 2.0 \\
 \midrule
Accuracy & 0.241 &	0.154	& 0.147	& 0.149 \\
\bottomrule
\end{tabular}
}
\end{table}

\section{Current Limitations and Future Works}\label{appendix-dis}

The generalization analysis in the current work focuses on the data-generating mechanism as described in Fig.~\ref{fig:scm} which is inspired and generalized by the random graph model. While this mechanism can in principle reflect real-world data generation process in various graph-structured data, in the open-world regime, there could exist situations involving topological distribution shifts by diverse factors or their combination. Future works can extend our framework for such cases where inter-dependent data is generated with different causal mechanisms. Another future research direction lies in the instantiation of the diffusion and advection operators in our model. Besides our choice of MPNN architecture to implement the advection process, other possibilities include structural and positional embeddings. We leave this line of exploration for the future, along with the analysis for the generalization capabilities of more general (e.g., non-linear) versions of the advective diffusion equation and other architectural choices.

\clearpage
\begin{figure}[t!]
\centering
\begin{minipage}{\linewidth}
\centering
\subfigure[]{
\begin{minipage}[t]{\linewidth}
\centering
\includegraphics[width=\linewidth]{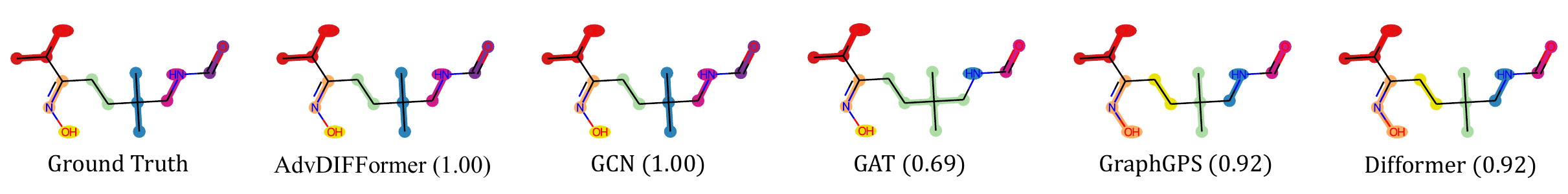}
\end{minipage}
}
\subfigure[]{
\begin{minipage}[t]{\linewidth}
\centering
\includegraphics[width=\linewidth]{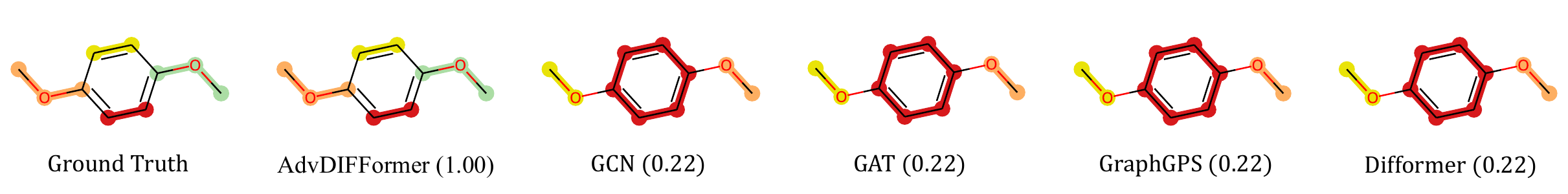}
\end{minipage}
}
\subfigure[]{
\begin{minipage}[t]{\linewidth}
\centering
\includegraphics[width=\linewidth]{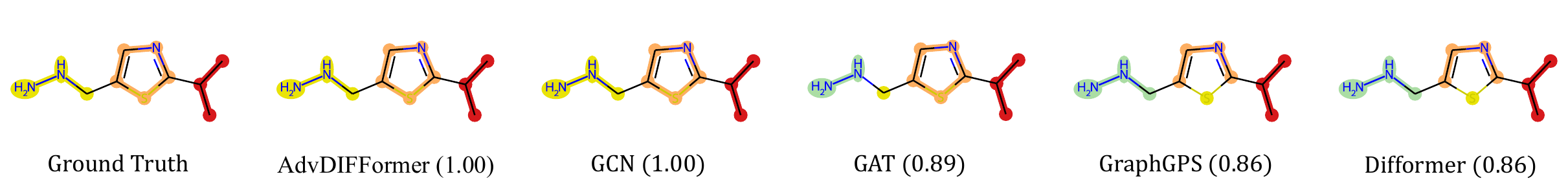}
\end{minipage}
}
\subfigure[]{
\begin{minipage}[t]{\linewidth}
\centering
\includegraphics[width=\linewidth]{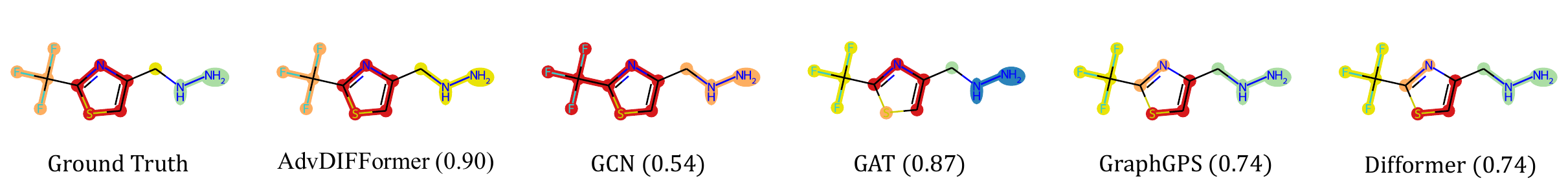}
\end{minipage}
}
\subfigure[]{
\begin{minipage}[t]{\linewidth}
\centering
\includegraphics[width=\linewidth]{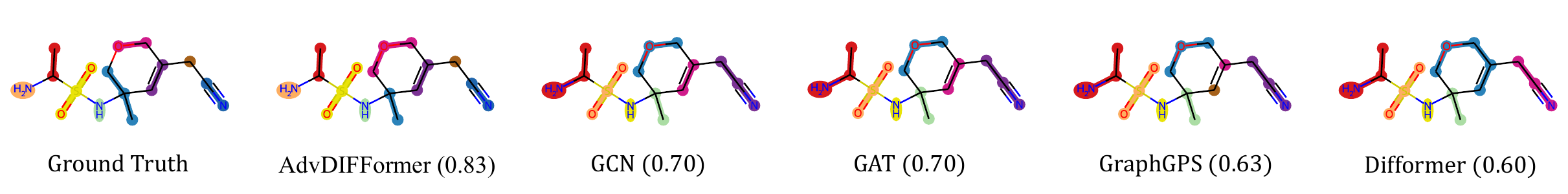}
\end{minipage}
}
\end{minipage}
\caption{Additional testing cases for molecular mapping operators generated by different models and the expert annotations (ground-truth). For each case, we report the score (the higher is better) that measures the closeness between the generated results and the expert annotations.}
\label{fig:ham-demo-appendix1}
\end{figure}

\clearpage
\begin{figure}[t!]
\centering
\begin{minipage}{\linewidth}
\centering
\subfigure[]{
\begin{minipage}[t]{\linewidth}
\centering
\includegraphics[width=\linewidth]{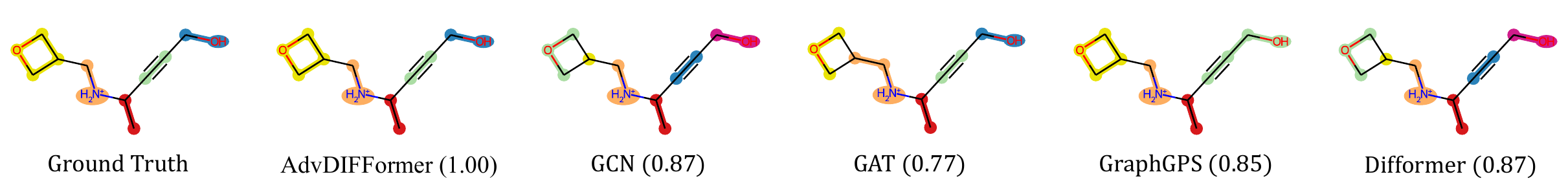}
\end{minipage}
}
\subfigure[]{
\begin{minipage}[t]{\linewidth}
\centering
\includegraphics[width=\linewidth]{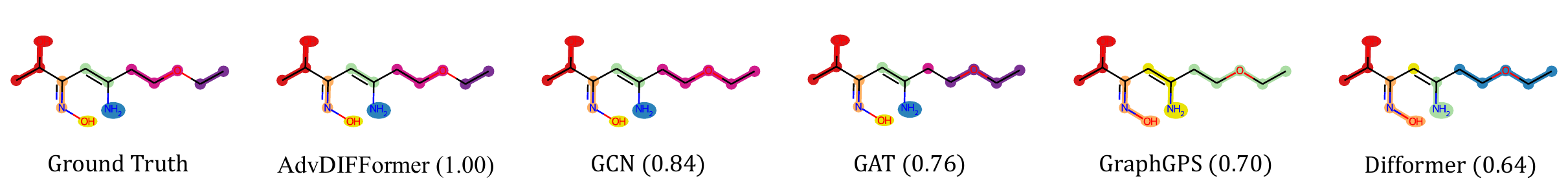}
\end{minipage}
}
\subfigure[]{
\begin{minipage}[t]{\linewidth}
\centering
\includegraphics[width=\linewidth]{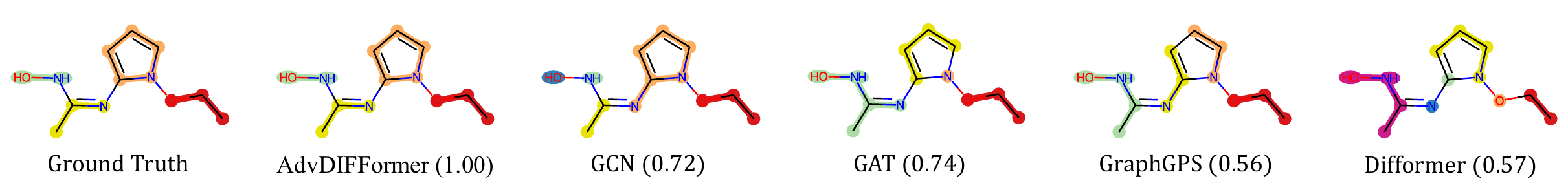}
\end{minipage}
}
\subfigure[]{
\begin{minipage}[t]{\linewidth}
\centering
\includegraphics[width=\linewidth]{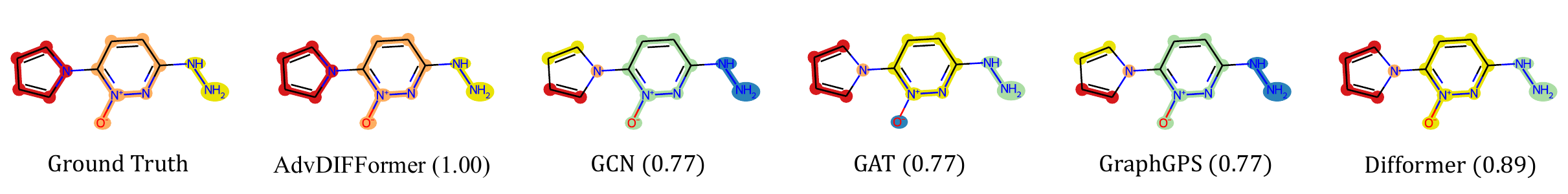}
\end{minipage}
}
\subfigure[]{
\begin{minipage}[t]{\linewidth}
\centering
\includegraphics[width=\linewidth]{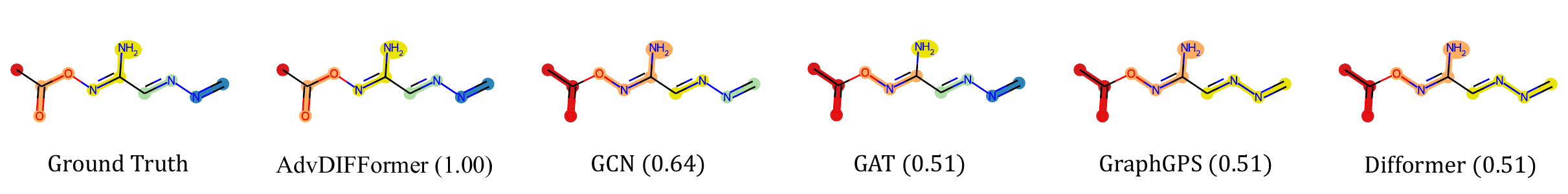}
\end{minipage}
}
\end{minipage}
\caption{Additional testing cases for molecular mapping operators generated by different models and the expert annotations (ground-truth). For each case, we report the score (the higher is better) that measures the closeness between the generated results and the expert annotations.}
\label{fig:ham-demo-appendix2}
\end{figure}

\end{document}